\def\figref#1{figure~\ref{#1}}
\def\secref#1{section~\ref{#1}}
\def\eqref#1{equation~\ref{#1}}
\def\1{\bm{1}}
\def\rv{{\textnormal{v}}}
\def\rvepsilon{{\mathbf{\epsilon}}}
\def\vv{{\bm{v}}}
\DeclareMathAlphabet{\mathsfit}{\encodingdefault}{\sfdefault}{m}{sl}
\SetMathAlphabet{\mathsfit}{bold}{\encodingdefault}{\sfdefault}{bx}{n}
\newcommand{\cv}{\mathbf{c}}
\newcommand{\sv}{\mathbf{s}}
\newcommand{\wv}{\mathbf{w}}
\newcommand{\xv}{\mathbf{x}}
\newcommand{\yv}{\mathbf{y}}
\newcommand{\Iv}{\mathbf{I}}
\newcommand{\deltav     }{\boldsymbol \delta     }
\newcommand{\epsilonv   }{\boldsymbol \epsilon   }
\newcommand{\muv        }{\boldsymbol \mu        }
\newcommand{\Sigmav     }{\boldsymbol \Sigma     }
\newcommand{\ours}{\texttt{TSR}}
\theoremstyle{plain}
\newtheorem{theorem}{Theorem}[section]
\newtheorem{lemma}[theorem]{Lemma}
\theoremstyle{definition}
\newtheorem{definition}[theorem]{Definition}
\theoremstyle{remark}
\icmltitlerunning{Temporal Score Rescaling}
\begin{document}

\twocolumn[
  \icmltitle{Temporal Score Rescaling \\ for Temperature Sampling in Diffusion and Flow Models}



  \icmlsetsymbol{equal}{*}

  \begin{icmlauthorlist}
    \icmlauthor{Yanbo Xu}{equal,cmu}
    \icmlauthor{Yu Wu}{equal,cmu}
    \icmlauthor{Sungjae Park}{cmu}
    \icmlauthor{Zhizhou Zhou}{stanford}
    \icmlauthor{Shubham Tulsiani}{cmu}
  \end{icmlauthorlist}

\centerline{\url{https://temporalscorerescaling.github.io}}

  \icmlaffiliation{cmu}{School of Computer Science, Carnegie Mellon University, Pittsburgh, USA}
  \icmlaffiliation{stanford}{Department of Computer Science, Stanford University, California, USA}

  \icmlcorrespondingauthor{Yanbo Xu}{yanboxu@princeton.edu}

  \icmlkeywords{Machine Learning, ICML}

  \vskip 0.3in
]



\printAffiliationsAndNotice{}  

\begin{figure*}[t]
\centering
\includegraphics[width=\textwidth]{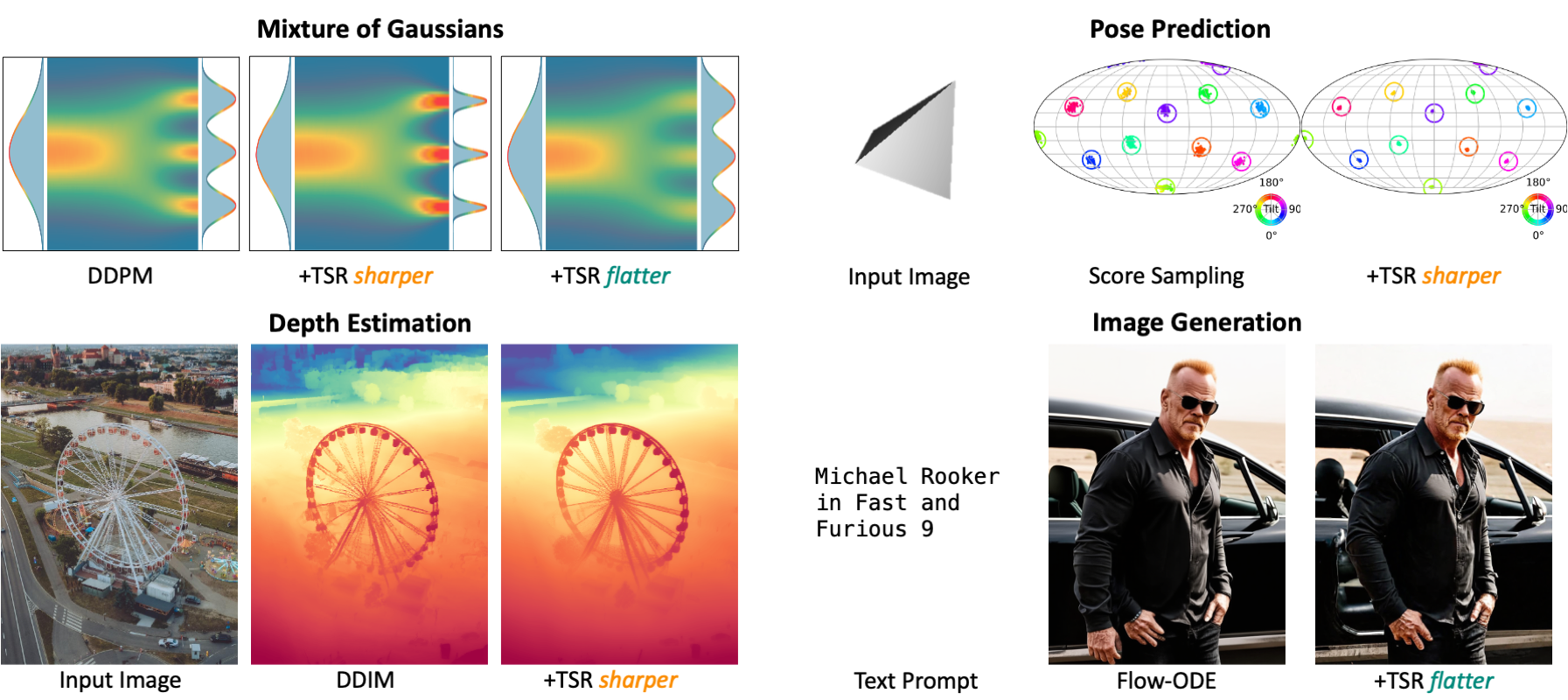}
\caption{\textbf{Temporal Score Rescaling (\ours)} provides a mechanism to steer the sampling diversity of diffusion and flow models at inference. \textit{Top-left:} Probability density evolution when sampling a 1D Gaussian mixture with DDPM, 
and the effects of \ours~, which can control the sampling process to yield sharper or flatter distributions. 
\textit{Top-right, bottom:} \ours~can be applied to any pre-trained diffusion or flow model, improving performance across diverse domains such as pose prediction, depth estimation, and image generation.}

\label{fig:teaser}
\end{figure*}

\begin{abstract}

We present a mechanism to steer the sampling diversity of denoising diffusion and flow matching models, allowing users to sample from a sharper or broader distribution than the training distribution. We build on the observation that these models leverage (learned) score functions of noisy data distributions for sampling and show that rescaling these allows one to effectively control a `local' sampling temperature. Notably, this approach does not require any finetuning or alterations to training strategy, and can be applied to any off-the-shelf model and is compatible with both deterministic and stochastic samplers. We first validate our framework on toy 2D data, and then demonstrate its application for diffusion models trained across five disparate tasks -- image generation, pose estimation, depth prediction, robot manipulation, and protein design. We find that across these tasks, our approach allows sampling from sharper (or flatter) distributions, yielding performance gains \eg depth prediction models benefit from sampling more likely depth estimates, whereas image generation models perform better when sampling a slightly flatter distribution.
\end{abstract}
\section{Introduction}
\seclabel{intro}

Score-based generative models, such as denoising diffusion \citep{ho2020denoising} and flow matching \citep{FlowMatching-2023, RectifiedFlow-2022}, have become ubiquitous across AI applications.
Given training data $\{\xv^n\}$, they can model the underlying data distribution $p(\xv)$ (or $p(\xv|\cv)$ for conditional settings) and at inference, they allow drawing samples $\xv \sim p(\xv)$ \eg to generate novel images.
However, in certain applications, we may not want to truly sample the original data distribution.
For example, when predicting depth from RGB input, we may want the more likely estimate(s) as output. In contrast, an artist exploring design choices may want the trained image generative model to yield more diverse samples, even if they may be somewhat less likely in the data. In this work, we ask whether we can steer the sampling process of diffusion or flow matching models to output more likely (or conversely, more diverse) samples than the original training data.

This process of trading off sample likelihood and diversity at inference is commonly referred to as \emph{temperature sampling}~\citep{hinton2015distilling} -- a higher temperature leads to diverse samples, and a lower temperature leads to more likely ones. 
While prior methods have investigated temperature sampling for score-based generative models like denoising diffusion, developing an efficient temperature sampling method for pre-trained diffusion/flow models remains an open challenge.  For example, commonly leveraged techniques like classifier-free guidance~\citep{diffuion_cfg} or variance-reduced sampling~\citep{yim2023se,geffner2025proteina} can trade off sampling diversity and likelihood, but as we show later, these are not probabilistically interpretable as temperature scaling the data distribution. Conversely, methods such as likelihood-weighted finetuning~\citep{shih_scaling} or Langevin correction~\citep{song-SDE-2021, du-Composition-MCMC-2024} can indeed allow temperature sampling, but at the cost of additional training or significantly increased inference-time computation. In this work, we instead seek to develop a (local) temperature sampling method that is: a) \emph{training free} \ie does not require fine-tuning or distilling a pre-trained model, b) compatible with deterministic samplers \eg DDIM \citep{ddim}, c) efficient \ie does not increase the number of score evaluations at inference, and d) provably correct for some simple distributions.

Towards developing such an approach, we note that denoising diffusion and flow matching models define a forward process to induce noisy data distributions $p(\xv_t)$ and train neural networks to approximate the corresponding score functions $\nabla~\log p(\xv_t)$.
We ask whether one can analytically relate these to the score of the (hypothetical) distributions $\bar{p}(\xv_t)$ that would be induced by the forward process if the original data distribution were temperature scaled. 
We study the case of mixture of isotropic Gaussians, and derive a simple (time-dependent) rescaling function. As the reverse sampling process for sampling flow/diffusion models relies only on the learned score functions, our derived rescaling thus allows a training-free approach by simply scaling the inferred score at each inference step. While the analytical derivation is restricted to a simple setting, we show that our approach can be generally interpreted as a `local' temperature sampling method, where it does not alter the overall distribution of global modes, but controls the local variance of samples around it.

We perform experiments to highlight the broad applicability of \ours. We show that it can efficiently allow local temperature sampling for denoising diffusion and flow matching models and is compatible with generic stochastic and deterministic samplers. We study diverse applications like image generation, depth estimation, pose prediction, robot manipulation, and protein generation. Across these applications, we show that \ours~can provide a plug-and-play solution to control the sampling diversity of pre-trained models and leads to consistent performance gains \eg allowing more precise depth and pose inference, or enabling image generation to better match real data distribution.

\section{Prior Art}
\seclabel{prior}





\textbf{Guided Inference.}
A widely adopted mechanism for steering sampling in diffusion and flow models is to leverage Classifier-Free Guidance (CFG)~\citep{diffuion_cfg}. While this allows one to trade off likelihood and diversity by controlling the effect of the conditioning on the drawn samples, it is fundamentally different from temperature scaling. Moreover, CFG cannot be applied to unconditional models and even for conditional ones, requires training with condition dropout. An alternative to CFG by~\cite{karras2024guiding} is to use a `bad version' of the diffusion model for guidance, but its probabilistic interpretation is unclear and it also requires intermediate checkpoints which are not widely available even for open-weight models. In comparison, \ours~serves as a plug-and-play technique compatible with any diffusion and flow matching model  without any requirement on training. Moreover, as we empirically demonstrate for image generation, our method is orthogonal to CFG and can be applied together for further improvement in quality.

\begin{figure*}[t]
\centering
\includegraphics[width=\textwidth]{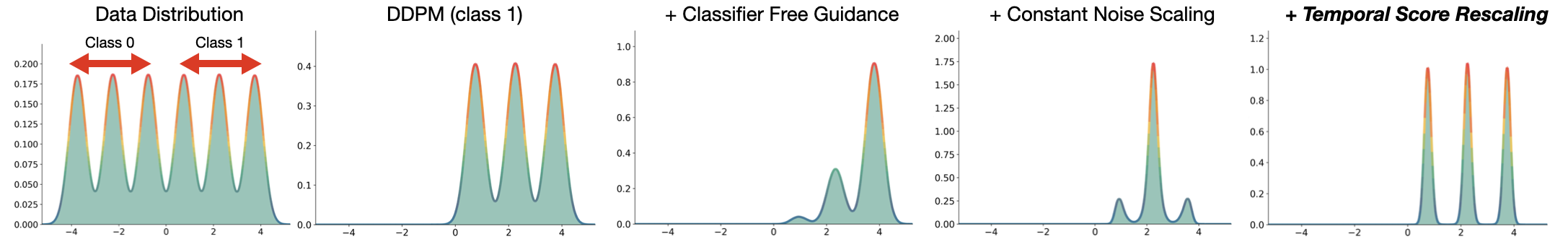}
\caption{\textbf{Comparison on Uniform Mixture of 1D Isotropic Gaussians.} 
The uniform mixture of Gaussians distribution is divided into two classes (subplot 1). We apply CFG, CNS, and \ours~to scale the conditional distribution of Class 1 (subplot 2). CFG and CNS lead non-uniform weights and tend to lose modes, while~\ours~preserve all modes and effectively reduce the variance of the samples.}
\label{fig:toy_cfg}
\end{figure*}

\textbf{Temperature Scaling in Diffusion Models.} We are not the first to consider temperature sampling in  context of diffusion models. In particular, \cite{shih_scaling} presented a technique to finetune diffusion (and autoregressive) models for temperature scaled inference. Their approach assigned an importance weight to each training sample based on its likelihood approximated by computing its ELBO with respect to a pretrained diffusion model on the same data. However, this approach is not training-free, making it difficult to leverage for large models and impossible in scenarios where training data is unavailable. An alternative training-free approach is to modify the reverse sampling by applying stochastic MCMC corrector steps \citep{song-SDE-2021, du-Composition-MCMC-2024}, but can incur 5 times additional computation. Recent work \cite{fkc} avoid excessive corrector steps by re-weighting on batches of samples, which is inefficient when only one sample is desired (\eg image generation, robot manipulation). In contrast, \ours \ is a training-free approach that has \textit{zero inference overhead}. 



\textbf{Pesudo-temperature Sampling via   Noise Scaling.}
Perhaps the closest to our approach in terms of being efficient and training-free is the technique of `Constant Noise Scaling' (CNS) where one scales the stochastic noise at each sampling step by a constant. More formally, following the definition by \cite{song-SDE-2021}, CNS can be viewed as sampling the following reverse SDE:
\begin{equation}
    d\xv = [f(\xv,t) - g(t)^2 \nabla \log p_t(\xv)]dt + \frac{g(t)}{\sqrt{k}}d\bar{\wv}
    \eqlabel{cns-sde}
\end{equation}
where $f(\xv,t),\ g(t)$ denote the drift and diffusion coefficient, and $d\bar{\wv}$ is a standard Wiener process. Compared to regular reverse diffusion SDE, the noise term is scaled by a constant $1 / \sqrt{k}$. While CNS is the de facto approach to control sample variance in several domains \citep{yim2023se, geffner2025proteina}, as \cite{shih_scaling} point out, it is only a \emph{`pseudo temperature'} sampling method. 
Intuitively, the noise-to-score ratio controls the strength of exploration versus converging to distribution modes during sampling. By scaling down this ratio by a constant, CNS over-suppresses exploration at high noise levels and under-suppresses it at low noise levels, leading to inadequate exploration of the data space when the model should recover global structure. We empirically show in \secref{analysis} that CNS behaves differently from temperature scaling and drop modes even for simple distributions.
Moreover, CNS only applies to stochastic samplers and struggles with modern flow-matching models (see \secref{subsec: image}). In contrast, we propose a time-dependent score scaling schedule that preserves the global structure of the sampled distribution and is compatible with both deterministic and stochastic samplers.

\section{Analysis}
\label{analysis}
To understand the behavior of \ours, we first empirically validate it on toy data and show it is more effective in scaling the variance of samples while preserving each local mode compared to existing approaches. Then, we analyze how the input parameters $(k, \sigma)$ control \ours~and interpret their meanings in general settings.

\subsection{Validation on Toy Distributions}
\paragraph{Mixture of 1D Gaussians.}
We begin with a simple conditional generation task using a uniform mixture of 1D isotropic Gaussians in \figref{fig:toy_cfg}, where the left three and right three modes correspond to two different classes. We apply classifier-free guidance (CFG) with guidance scale $10$, constant noise scaling (CNS) and \ours~ with $k=10$ individually to scale the conditional distribution and evaluate whether each method preserves all modes under scaling. As shown in \figref{fig:toy_cfg}, CFG produces imbalanced samples, often favoring outer modes, while CNS shifts mass toward central modes at the expense of others. By contrast,~\ours~samples evenly across all modes while reducing intra-mode variance, demonstrating that it preserves the multimodal structure even under conditioning.

\vspace{-3mm}
\paragraph{General 2D Distributions.}
We also apply \ours~ to unconditional generation on two complex 2D distributions: checkerboard and swiss roll. We train a small-scale diffusion model for each distribution and compare the scaled distribution sampled by CNS and~\ours in \figref{fig:toy_comparison}. We observe that CNS consistently biases samples toward the central modes, resulting in mode collapse and poor coverage of peripheral regions. This supports the intuition that reducing noise too aggressively restricts exploration during the sampling process. In contrast,~\ours~maintains coverage of the global distribution while reducing local variance around each mode, producing samples aligning with the true distribution. These results show that, although derived for isotropic Gaussian data,~\ours~generalizes to more complex scenarios and provides consistent improvements in both conditional and unconditional generation.

\begin{figure*}[t]
\centering
\includegraphics[width=\textwidth]{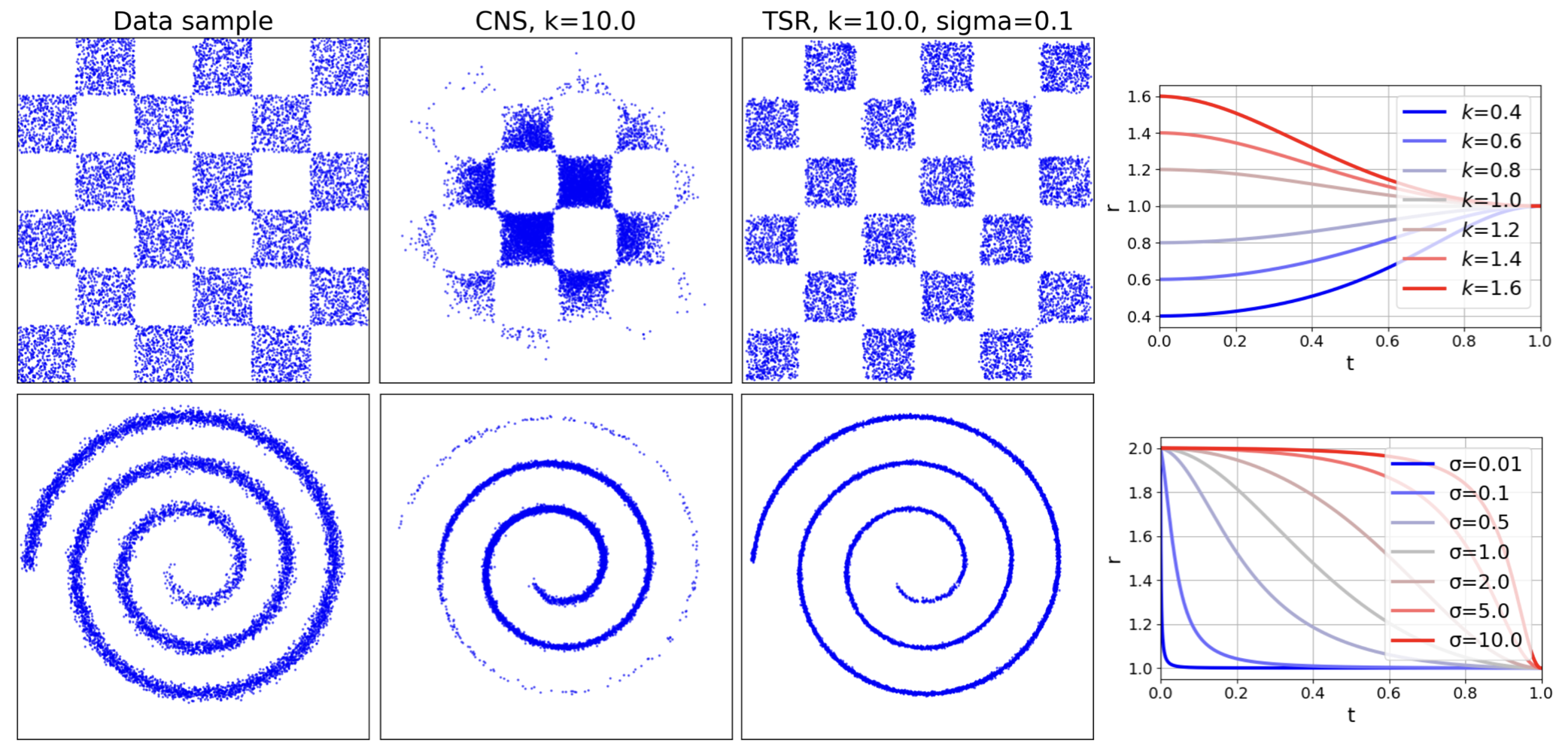}
\caption{\textbf{Left: Comparison on 2D Checkerboard and Swiss Roll Distributions.} We compare samples from CNS and~\ours. While CNS biases sampling towards the central modes and drops peripheral ones,~\ours~preserves all modes while reducing variance without generating divergent samples. 
\textbf{Right: Effect of Hyperparameters on the Rescaling Factor.} In the rightmost column, we plot the \ours~rescaling factor $r_t$ on y-axis against diffusion time $t$. With $\sigma=1.0$, varying $k$ controls the asymptotic value of $r_t$ (top); with $k=2.0$, varying $\sigma$ determines how early rescaling takes effect during sampling (bottom).}

\label{fig:toy_comparison}
\end{figure*}


\subsection{Interpreting Rescaling Hyperparameters}
In the derivation of~\ours, $k$ referred to the factor of variance reduction and $\sigma$ referred to the variance of the modes in the data distribution. However, in real-world scenarios with more complex distributions, the variance of the data distribution is unknown. We provide an intuitive explanation of the role of $k$ and $\sigma$ on the rescaling factor $r_t$ to democratize the practical use of~\ours~in various scenarios. Specifically, we show how the rescaling factor $r_t$ changes over sampling time with different $k$ and $\sigma$ values in Fig.~\ref{fig:toy_comparison}. Intuitively, $k$ indicates the max/min of the rescaling factor $r_t$. As $t\rightarrow0$, signal-to-noise ratio $\eta_t\rightarrow\infty$, and $r_t\rightarrow k$. Meanwhile, $\sigma$ indicates how early we want to steer the sampling process. The larger $\sigma$, the earlier the sampling is steered. A very small $\sigma$ lets us use the original diffusion sampling ($r_t\approx1.0$) and only steer the last few denoising steps. 


\vspace{-1mm}
\section{Formulation}
\vspace{-1mm}
\label{formulation}


\vspace{-1mm}
\subsection{Preliminaries}
\vspace{-1mm}

Both diffusion and flow matching models fall under the family of stochastic interpolants \citep{albergo2023stochastic}, which convert samples from data distribution $\xv_0 \sim p_0(\xv)$ to gaussian noise $\rvepsilon \sim \mathcal{N}(0, \Iv)$. The interpolant process can be defined as:
\begin{equation}
\xv_t = \alpha_t \xv_0 + \sigma_t \rvepsilon \label{interpolant}
\end{equation}
Different noise schedules $\alpha_t, \sigma_t$ correspond to different formulations of stochastic interpolants. For example, for flow matching models, it is common to set $\alpha_t=1-t, \ \sigma_t=t$, while for variance-preserving diffusion models, they are defined such that $\alpha_t^2 + \sigma_t^2 = 1$.


We can sample from the data distribution by training a model $\sv_\theta(\xv,t) = \nabla \log p_t(\xv)$ that estimates the score of the noisy distribution. Starting from $\xv_T \sim \mathcal{N}(0, \Iv)$, the sampling process usually solves either a reverse-time SDE or a probability flow ODE. In practice, the learned model could predict various equivalent parameterization of the score, such as noise $\rvepsilon_\theta(\xv_t, t)$ (common in denoising diffusion) or the probability flow velocity $\rv_\theta(\xv_t, t)$ (common in flow matching), which can all be expressed as linear combinations of score and $\xv_t$ (See \secref{steering-inf}).
\subsection{Temporal Score Rescaling}
Given a pre-training score function $\sv_{\theta}$, we are interested in designing a temperature sampling process that does not require training or additional computation at inference. In particular, we propose a mechanism that achieves \textit{local temperature scaling}, which steers the variance of the sampled distribution while preserving the global distribution structure (\eg without mode dropping). More formally, we define local temperature scaling as the task that takes in a data distribution $p_0(\xv)$ modeled as a mixture of (\emph{an unknown set of}) Gaussians and generates the corresponding `sharper' or `flatter' distributions $\tilde{p}_0^k(\xv)$ (parameterized by $k$):
\begin{align*}
    p_0(\xv) &\equiv \sum_{m} w_m\mathcal{N} (\xv; \mathbf{\mu}_m, \Sigmav_m) \\~\Rightarrow~ \tilde{p}_0^k(\xv) & \equiv \sum_{m} w_m\mathcal{N} (\xv; \mathbf{\mu}_m, \frac{1}{k}\Sigmav_m)
\end{align*}
Intuitively, $\tilde{p}^k_0(\xv)$ represents a distribution where the variance near each local mode in the data distribution is scaled by $\frac{1}{k}$, while preserving all the means and weights. Such a local scaling effect is different from the traditional temperature scaling that would change the weights of modes and alter the distribution structure. We now formulate our problem statement as: How can we alter the pretrained score function $\sv_{\theta}$ so that a diffusion or flow sampler yields $\tilde{p}_0^k(\xv)$ instead of $p_0(\xv)$?

\paragraph{Isotropic Gaussian Data.}
To instantiate this, we start with a simple scenario where the data are drawn from a single isotropic Gaussian distribution $\mathbf{\xv_0} \sim \mathcal{N}({\bm \mu}, \sigma^2 \Iv)$. The target is to sample from the locally scaled distribution $\tilde{p}^k_0(\xv) \equiv \mathcal{N}({\bm \mu}, \frac{\sigma^2}{k} \Iv)$. Under the stochastic interpolant process (\eqref{interpolant}), we define $p_t(\mathbf{\xv}),~\tilde{p}^k_t(\xv)$ as the noisy distributions at time $t$ for the original and scaled data distribution, respectively.
Since both the original and scaled data distributions are Gaussian, their corresponding noisy distribution can also be shown to be Gaussian:
\begin{equation}
    \label{gauss-p_t}
    \begin{aligned}
    p_t(\xv) &= \mathcal{N}(\alpha_t {\bm \mu}, (\alpha_t^2 \sigma^2 + \sigma_t^2) \Iv)\qquad 
    \\ \tilde{p}^k_t(\xv) & = \mathcal{N}(\alpha_t {\bm \mu} (\alpha_t^2 \frac{\sigma^2}{k} + \sigma_t^2) \Iv)
    \end{aligned}
\end{equation}

Then, we can derive the corresponding score functions for the above distributions:
\begin{equation}
\label{gauss-score}
\begin{aligned}
    \nabla \log~p_t(\xv) &= -\frac{\xv - \alpha_t {\bm \mu}}{\alpha_t^2 \sigma^2 + \sigma_t^2},\qquad \\
    \nabla \log~\tilde{p}^k_t(\xv) & = -\frac{\xv - \alpha_t {\bm \mu}}{ \alpha_t^2\frac{\sigma^2}{k} + \sigma_t^2}
\end{aligned}
\end{equation}
Comparing the two score functions above, we observe that the score for the scaled distribution and the score for the original distribution follow a time-dependent linear relationship:
\begin{equation}
    \label{score-ratio}
    \nabla \log~\tilde{p}^k_t({\xv}) = \frac{\eta_t \sigma^2 + 1}{\eta_t \frac{\sigma^2}{k} + 1}~\nabla \log~p_t({\xv})
\end{equation} 
where $\eta_t = {\alpha_t^2}/{\sigma_t^2}$ is the signal-to-noise ratio. Note that $k=1.0$ recovers the original score. Given a score estimator $\sv_\theta(\xv, t) = \nabla \log~p_t({\xv})$, we can compute the score of $\tilde{p}^k_t$ with the above score rescaling equation and thus sample from $\tilde{p}^k_0$ from the same sampling process.

\paragraph{Mixture of Gaussians.}
We can show that the score ratio relationship (\eqref{score-ratio}) is also a valid approximation if the data distribution is a mixture of \emph{well-separated} isotropic Gaussians.
In \secref{gaussian-mixture-proof}, we prove that the expected error between the score computed by \eqref{score-ratio} and the real score are bounded at all timestep $t$. On the high level, we derive an exponential bound for small $t$, where the modes are well-separated and only one Gaussian component dominates. For large $t$, we derive a polynomial bound based on the intuition that the distributions are similar to pure noise $\mathcal{N}(0, \Iv)$. The error vanishes at both ends when $t$ converges to 0 or 1. The maximum error at any intermediate $t$ also converges to zero as the modes becoome more separated. We also empirically verify these results in \secref{supp:empirical-error}

\vspace{-1mm}
\subsection{Steering Inference in Diffusion and Flow Matching}
\vspace{-1mm}
\label{steering-inf}
While the above analytical derivation for a score rescaling function focused on simple distributions, we empirically find that it can be applied across generic distributions and we operationalize \eqref{score-ratio} to define \ours~sampling, a simple algorithm for steering sampling in diffusion and flow models:


\begin{tcolorbox}[
  colback=white,       
  colframe=black,      
  arc=2mm,             
  boxrule=0.5pt,       
  left=5pt, right=5pt, top=5pt, bottom=5pt 
]
\textbf{Sampling with Temporal Score Rescaling} \ours~$(k, \sigma)$ \\[4pt]
Given a pre-trained score model $s_{\theta}$, \ours~sampling substitutes its score prediction with:
\begin{equation}
\begin{aligned}
    \tilde{\sv}_{\theta}(\xv, t) & = r_t(k, \sigma)\,\sv_{\theta}(\xv, t),
    \qquad  \\
    r_t(k, \sigma) :&= \frac{\eta_t \sigma^2 + 1}{\eta_t \tfrac{\sigma^2}{k} + 1}
    \label{tsr}
\end{aligned}
\end{equation}
where $k, \sigma$ are user-defined parameters, and $\eta_t$ is the signal-to-noise ratio of the forward process.
\end{tcolorbox}

\begin{figure*}[t]
    \centering
    \includegraphics[width=0.98\linewidth]{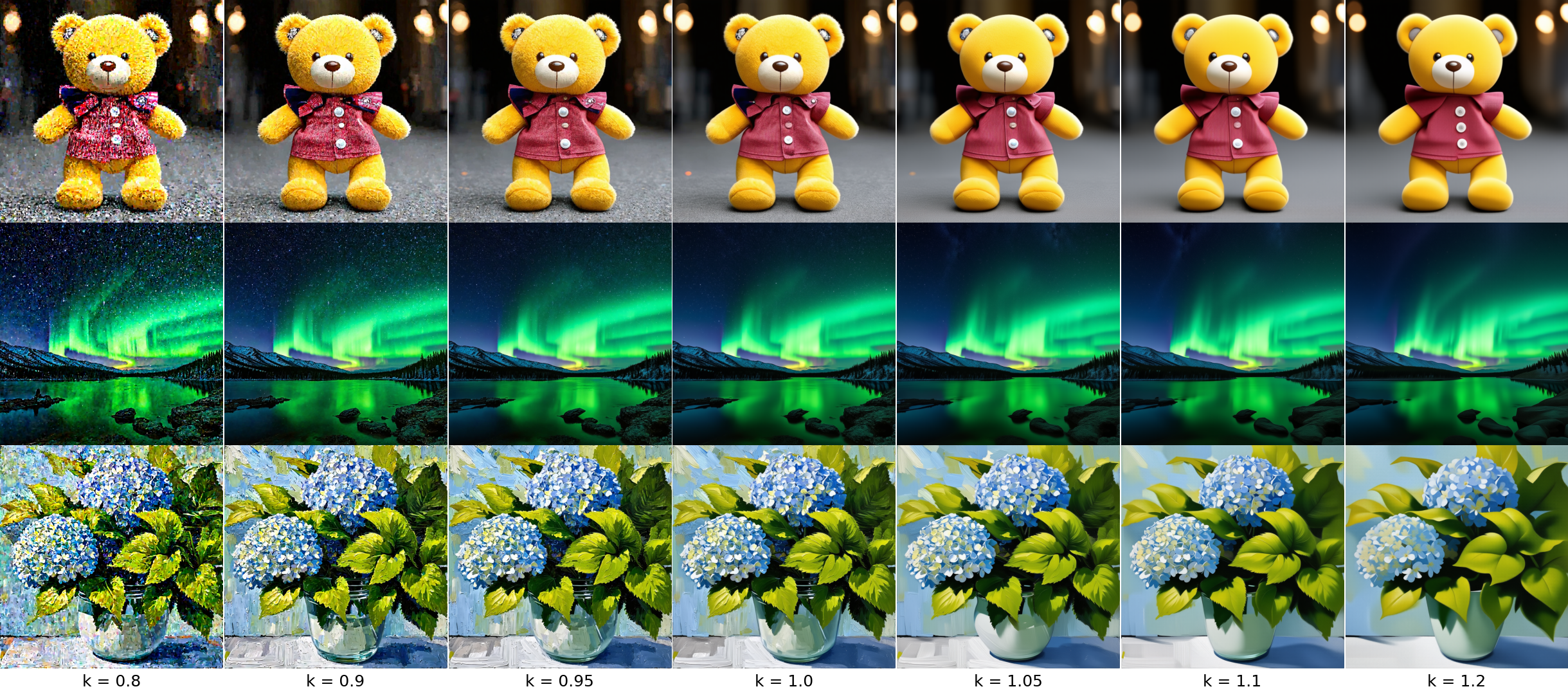}
    \caption{\textbf{Qualitative Examples for Varying $k$.} \ours~ allows for tuning the generated outputs to be more diverse and detailed (lower k) or more smooth and likely (higher k). While neither extreme is desirable, we notice a $k$ slightly smaller than 1 gives pleasing images with enhanced details.}
    \label{fig:image-qualitative}
\end{figure*}

This makes \ours~a plug-and-play method compatible with any parameterization of $s_\theta$ and sampling algorithm, since conversions between score and model predictions are always linear and invertible. 

\vspace{1mm}
\noindent \textbf{Denoising Diffusion}: These models are typically instantiated via neural networks $\epsilon_\theta$ that learn to predict the noise added. We can infer the predicted score from this noise via a simple linear relation $s_\theta(\xv,t) = -{\sigma_t^{-1}}\epsilon_\theta(\xv,t)$. 
We can thus perform \ours~sampling in denoising diffusion models by simply using a rescaled noise prediction $\tilde{\epsilon}_\theta(\xv,t)$ in any diffusion sampler (e.g., DDPM, DDIM):
\begin{equation}
    \tilde{\epsilonv}_\theta(\xv,t) = r_t(k, \sigma) \epsilonv_\theta(\xv,t)
    \label{epsilon-to-score}
\end{equation}

\vspace{1mm}
\noindent \textbf{Flow Matching}: For flow matching models predicting the probability flow velocity $v_\theta(x,t)$, the corresponding score function can be computed by \citep{ma2024sit}:
\begin{equation}
    \sv_\theta(\xv,t) = -\frac{\alpha_t \vv_\theta(\xv,t)-\dot{\alpha_t}\xv}{\sigma_t(\dot{\alpha_t}\sigma_t - \alpha_t\dot{\sigma_t})}
    \label{velocity-to-score}
\end{equation}
Combining \eqref{tsr} and \eqref{velocity-to-score}, we can derive the corresponding flow velocity $\tilde{\vv}_\theta$ for the scaled distribution, such that $\tilde{\sv}_\theta$ is a proper scaled version of the original score:
\begin{equation}
    \tilde{\vv}_{\theta}(\xv,t) = \alpha_t^{-1}(r_t(k, \sigma)(\alpha_t \vv_\theta(\xv,t) - \dot{\alpha_t}\xv) + \dot{\alpha_t}\xv)
    \label{tsr-flow}
\end{equation}
Applying this scaled velocity $\tilde{v}_\theta$ in the flow samplers yields desired samples from the scaled distribution. Similar conversion can also be derived for other parameterizations of diffusion models like $x_0$-prediction and $v$-prediction.

\section{Applications}
\label{applications}

We demonstrate the broad applicability and effectiveness of \ours~by applying it to a diverse set of real-world applications, spanning image generation (\secref{subsec: image}), protein design (\secref{subsec: protein}), depth estimation (\secref{subsec: depth}), pose prediction (\secref{subsec: pose}), and robot manipulation (\secref{subsec: robo}). 
For image generation, we find that a smaller $k$ enhances details and improves performance, while for other tasks, a larger $k$ yields higher accuracy of model predictions. To ease the parameter selection process, we include a general guideline in \secref{subsec: para selection}.

\subsection{Text-to-Image Generation} 
\label{subsec: image}

We examine the effect of steering the sampling distribution for diversity versus likelihood with \ours~on Stable Diffusion 3 \citep{SD3-2024}, a leading flow matching text-to-image model. As a creative task, image generation benefits from sampling a flatter distribution, which helps to recover more pleasing images with more high frequency details. We evaluate FID \citep{heusel2017gans, parmar2022aliased} and CLIP \citep{radford2021learning} scores against a 5k image subset from LAION Aesthetics \citep{schuhmann2022laion} across different CFG guidance scale $w_\text{cfg}$, \ours~parameter $k$ and $\sigma$. We fix the number of sampling steps to 30. In \figref{fig:fid-clip}, we see adjusting $w_\text{cfg}$ makes a trade-off between text-alignment and image fidelity---higher $w_\text{cfg}$ increases CLIP score at the cost of worse FID. Meanwhile, \ours~allows for additional improvement beyond the Pareto frontier of CFG. Compared to the regular Euler ODE sampling, \ours~reduces \textit{FID score from 24.77 ($\pm$ 0.10) to 22.81 ($\pm$ 0.13)} and increases \textit{CLIP score from  32.82 ($\pm$ 0.014) to 33.05 ($\pm$ 0.018)}. These results are averaged over 5 random seeds. \ours~achieves the optimal performance with $k=0.93, \sigma=3.0$. To verify the transferability of these parameters, we apply the same $(k, \sigma)$ to Flux.1 dev \citep{flux2024} and Stable Diffusion 2 \citep{rombach2022high} and report the results in Table \ref{tab:t2img-cross-models}. The optimal $(k, \sigma)$ found on SD3 consistently improve performance on other models as well, suggesting the robustness of the choice for $(k, \sigma)$ across models.

\begin{figure}
    \centering
    \includegraphics[width=0.8\linewidth]{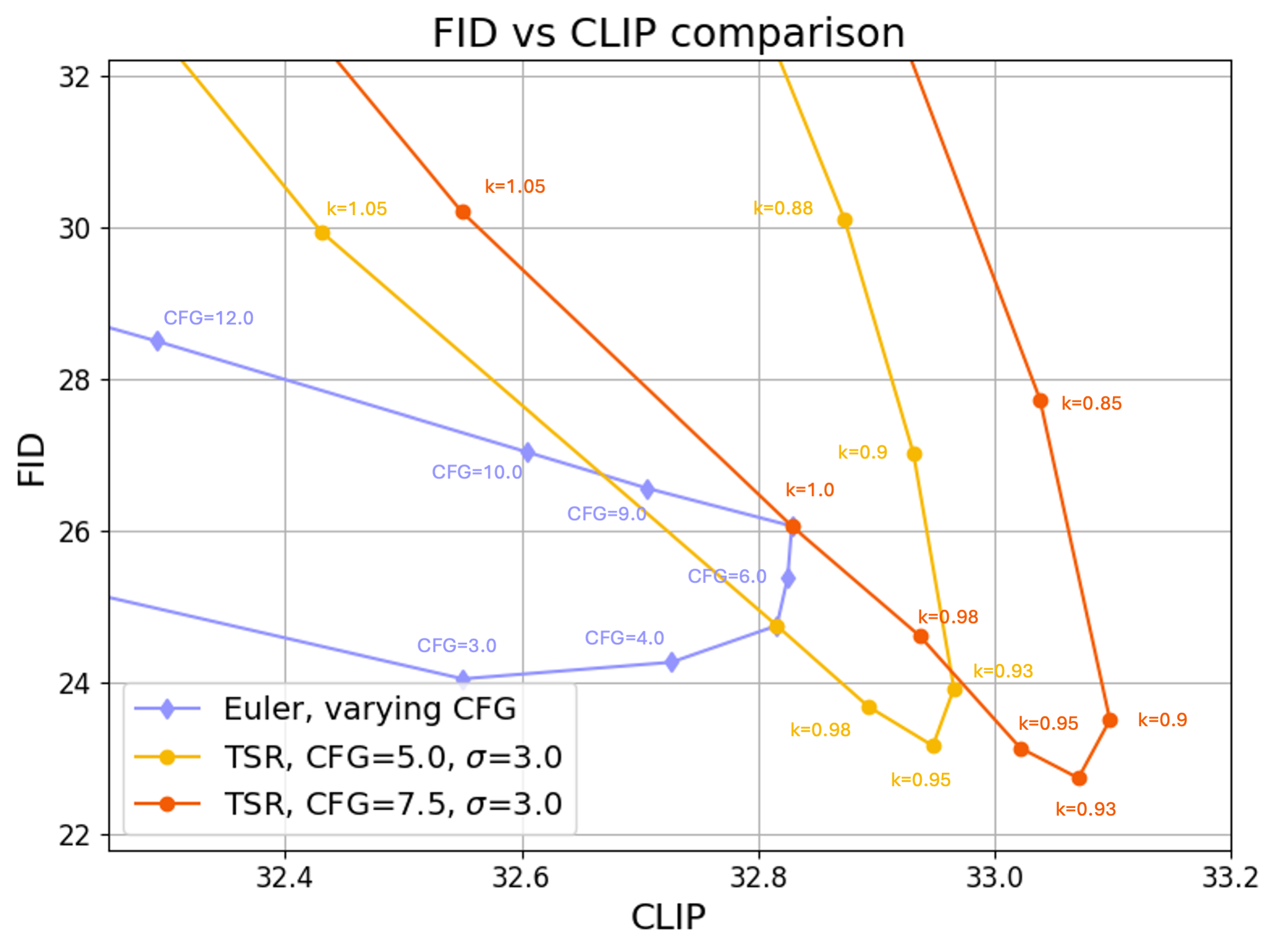}
    \vspace{-3mm}
    \caption{\textbf{Image Generation.} 
    \ours~achieves better text-alignment (CLIP) and image fidelity (FID), improving upon the Pareto frontier of CFG, which trades off between FID and CLIP.
    }
    \label{fig:fid-clip}
\end{figure}

Notably, while it is possible to perform stochastic sampling with flow models like SD3, we found that it performs significantly worse than ODE sampling with the same compute budget (see \secref{supp: image}), making CNS impractical. We also show in \secref{supp: image} that \ours~achieves superior performance with denoising diffusion model (SD2, \cite{rombach2022high}) compared to CNS and other common samplers.
Qualitatively, we observe in Fig. \ref{fig:image-qualitative} that lower $k$ leads to images with more high-frequency detail (in the extreme case more noise), and higher $k$ leads to smoother images. We infer that using a smaller $k$ flattens the modeled distribution and allows better coverage of the desirable image space. Overall, our results highlight that the control over the likelihood-diversity trade-off enabled by \ours~is beneficial in image generation.

\begin{table}
\centering

\resizebox{1.0\columnwidth}{!}{%
\begin{tabular}{lcccccc}
\toprule
& \multicolumn{2}{c}{SD3} & \multicolumn{2}{c}{SD2} & \multicolumn{2}{c}{Flux.1 dev} \\
\cmidrule(lr){2-3} \cmidrule(lr){4-5} \cmidrule(lr){6-7}
 & FID $\downarrow$ & CLIP $\uparrow$ & FID $\downarrow$ & CLIP $\uparrow$ & FID $\downarrow$ & CLIP $\uparrow$ \\
\midrule
Default Scheduler & 24.77 & 32.82 & 22.81 & 33.66 & 53.99 & 31.97 \\
+ \ours & \textbf{22.81} & \textbf{33.05} & \textbf{19.75} & \textbf{33.75} & \textbf{51.79} & \textbf{32.14} \\
\bottomrule
\end{tabular}%
}

\caption{\textbf{Evaluation of Text-to-Image Generation across Models}. \ours~consistently improve image quality across Stable Diffusion 3 \cite{SD3-2024}, Stable Diffusion 2 \cite{rombach2022high}, and Flux.1 dev \cite{flux2024}. The optimal $(k, \sigma)$ found on SD3 generalize effectively to other models. For SD3 and Flux.1 dev, the default scheduler is Euler-ODE. For SD2 the default scheduler is DDPM. }
\vspace{-5mm}
\label{tab:t2img-cross-models}
\end{table}




\subsection{Protein Generation}
\label{subsec: protein}

Generative models have emerged as a powerful paradigm in AI for Science. For example, Protein discovery ~\citep{abramson2024accurate, jumper2021highly, wu2024protein} is an application where such models have seen widespread adoption. However, not all generated proteins are valid in the real world. Thus, improving the designability of generated proteins is an important goal. CNS has previously been used to enhance sampling quality ~\citep{yim2023se, geffner2025proteina}.


\begin{figure}
    \centering
    \includegraphics[width=0.8\linewidth]{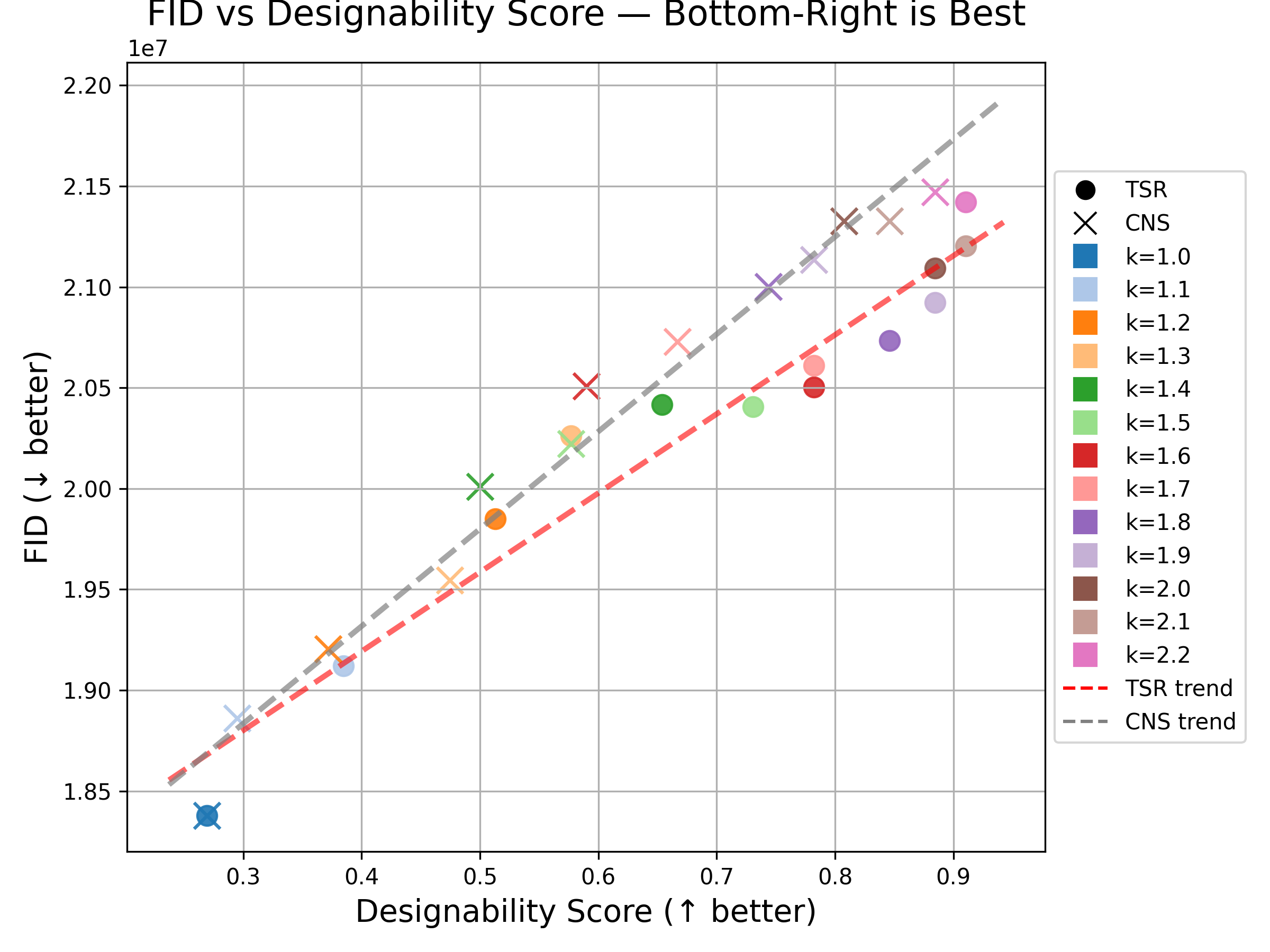}
    \vspace{-3mm}
    \caption{\textbf{Protein Generation.} \ours~improves the designability score while preserving the diversity (FID) better compared to CNS. The original sampling has a designability score of 0.22. }
    \label{fig:protein}
\end{figure}

We conduct experiments with FoldingDiff~\citep{wu2024protein}, a diffusion-based protein generation method, and compare \ours~ with CNS. Evaluation uses two complementary metrics: designability score\citep{wu2024protein}, measuring structural quality and real-world feasibility, and protein FID\citep{faltings2025protein}, capturing distributional similarity and thus diversity. Ideally, a method should achieve a high designability score and a low FID. As shown in \cref{fig:protein}, samples from ~\ours ~lie in the bottom right regions, which shows ~\ours~ maintains protein diversity better than CNS, while improving the designability.

\vspace{-4mm}
\subsection{Depth Estimation}
\label{subsec: depth}



The task of monocular depth estimation is inherently challenging due to its uncertainty—an object may appear large but distant, or small but close. Recent methods \citep{duan2024diffusiondepth, saxena2023monocular, ke2023repurposing} address this with diffusion models, where different samples correspond to plausible variations or interpolations of the underlying depth structure. We adopt Marigold \citep{ke2023repurposing}, which fine-tunes a pre-trained text-to-image diffusion model for depth estimation and achieves strong results. However, individual samples can be suboptimal due to both the sampling stochasticity and the ambiguity of depth estimation (~\cite{ke2023repurposing}). 
To mitigate this issue, it is desirable to increase the likelihood of each sampled estimate—i.e., to encourage samples to concentrate around the dominant modes of the learned distribution. Doing so reduces sampling variability and suppresses uncertain or noisy depth predictions.


\begin{table}[]
    \centering
    \resizebox{0.8\linewidth}{!}{
    \begin{tabular}{@{}lcc|cc@{}}
    \toprule
     & \multicolumn{2}{c|}{ETH3D} & \multicolumn{2}{c}{NYUv2}  \\ 
     & AbsRel $\downarrow$ & $\delta_1$ $\uparrow$ 
     & AbsRel $\downarrow$ & $\delta_1$ $\uparrow$ \\
    \midrule
    DDIM   & 7.1  & 90.4 & 6.0   & 95.9 \\
    + CNS  & 6.82 & 95.6 & 5.85  & \textbf{96.0} \\
    +\ours & \textbf{6.68} & \textbf{95.7} & \textbf{5.84} & \textbf{96.0} \\
    \bottomrule
    \end{tabular}
    }
    \caption{\textbf{Quantitative Evaluation of Depth Estimation.} 
    \ours~improves depth estimation and outperforms the naive baseline.}
    \label{tab:depth}
    \vspace{-6mm}
\end{table}


We evaluate on the ETH3D \citep{eth3d} and NYUv2 \citep{nyuv2} datasets. As shown in Table~\ref{tab:depth}, \ours~outperforms the default DDIM and CNS on prediction accuracy. By sampling from a sharper distribution, \ours~yields more probable outputs given the input image. Qualitative comparisons in Fig.\ref{fig: depth image} further show that \ours~produces cleaner depth maps than DDIM, particularly in high-uncertainty regions. 

\begin{figure*}[t]
    \centering
    \includegraphics[width=1.0\linewidth]{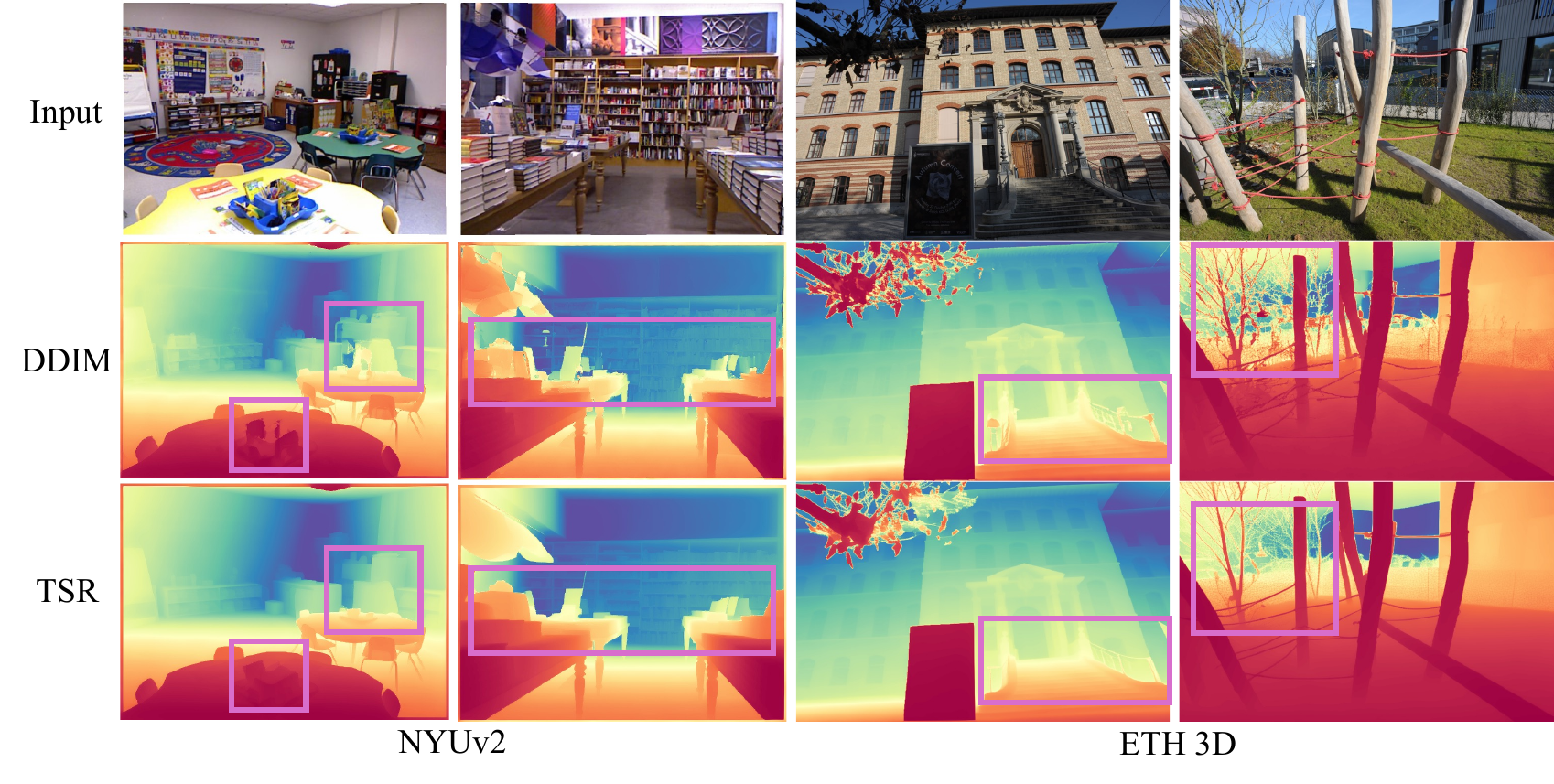}
    \vspace{-4mm}
    \caption{\textbf{Qualitative Depth Estimation Comparisons.} Compared to DDIM, \ours~with $k>1$ predicts cleaner depth in the regions with high uncertainty (highlighted by pink boxes).}
    \label{fig: depth image}
\end{figure*}

\subsection{Pose Prediction} 
\label{subsec: pose}

\begin{table}[]
\centering
\resizebox{0.9\linewidth}{!}{
\begin{tabular}{lccccc}
\toprule
& Error (deg) $\downarrow$ & \multicolumn{3}{c}{Acc@ (deg) $\uparrow$} \\ 
& & 0.2 & 0.5 & 1.0 \\ 
\midrule
Score Sampling & 0.444 & 9.4 & 68.3 & 97.9 \\
+ CNS (1600) & \textbf{0.350} & \textbf{20.0} & \textbf{84.9} & \textbf{99.1} \\
+ \ours~(7.0, 0.5) & 0.356 & 18.5 & 84.0 & 99.0 \\ 
\bottomrule
\end{tabular}
}
\caption{\textbf{Pose Prediction.} 
Mean error (deg) and accuracy within thresholds 0.2, 0.5, 1. $(k, \sigma) = (7.0, 0.5)$ for \ours, $k=1600$ for CNS.}
\label{tab:pose}
\vspace{-4mm}
\end{table}

Previous work \cite{leach-so3-diffusion, liegroup-diffusion, pose-diffusion, zhang2024raydiffusion} has shown that diffusion models can effectively predict object and camera poses in the $SO(3)$ space. We demonstrate~\ours~can improve such models' accuracy by sampling from a sharper distribution. Our evaluations are based on the $SO(3)$ diffusion models proposed by \cite{liegroup-diffusion}, where we apply \ours~and evaluate on the SYMSOL dataset \cite{implicitpdf2021}, which contains geometric shapes with a high order of symmetries. We visualize the effect of \ours~ in \ref{fig:pose-example} where we show the sampled poses on an example image from SYMSOL. \ours~samples poses more concentrated around ground truth modes (the circle centers) than the baseline score matching sampling used in \cite{liegroup-diffusion}. In quantitative evaluation (\ref{tab:pose}). \ours~predictions have lower average error and higher accuracy under a range of accuracy thresholds compared to score matching sampling, highlighting the benefits of predicting samples close to modes.  We find that CNS also reduces pose error, achieving a performance slightly better than \ours~on SYMSOL. However, we note that \ours~remain robust and applicable over many tasks and sampling methods where constant noise scaling is not possible. 


\begin{figure}[]
\centering
\includegraphics[width=0.9\linewidth]{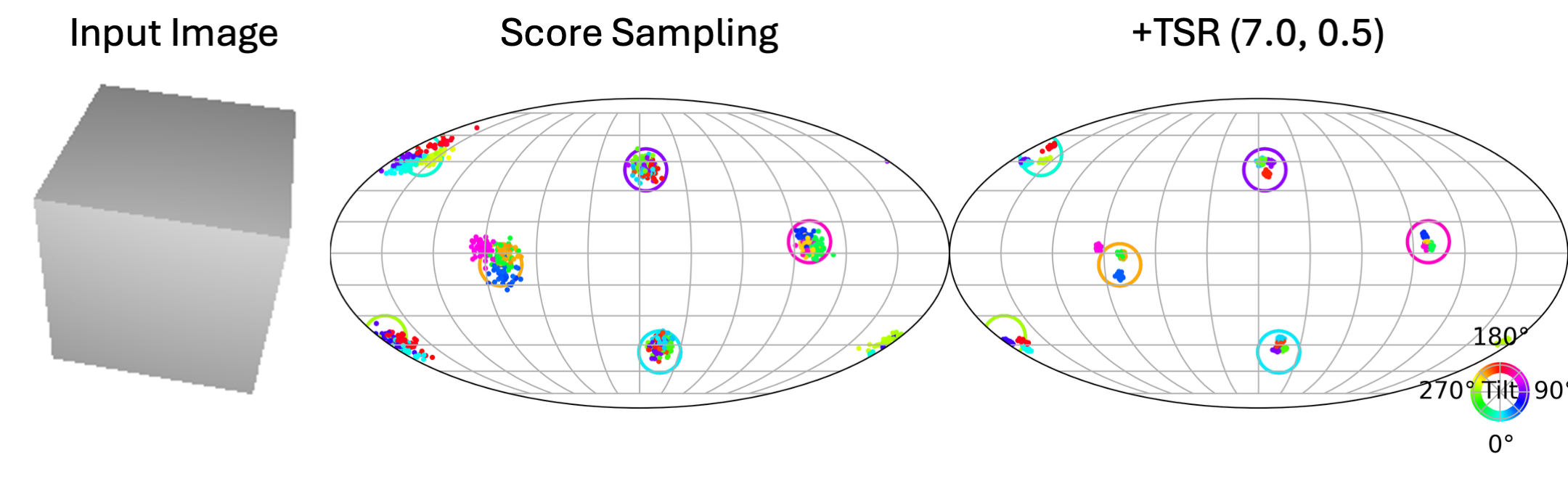}
\caption{\textbf{Predicted poses on SYMSOL.} 
\ours~reduces pose prediction error: each dot marks a sample’s first canonical axis (colored by rotation), while circles denote ground-truth poses.}
\label{fig:pose-example}
\end{figure}

\begin{table*}[!t]
\centering
 \footnotesize
\begin{tabular}{l*{11}{c}}
\hline
Model / Task ID & 0 & 1 & 2 & 3 & 4 & 5 & 6 & 7 & 8 & 9 & Average \\
\hline
Pi-0 & 86.0 & \textbf{97.3} & \textbf{80.7} & 96.0 & \textbf{84.7} & 93.3 & 81.3 & 94.7 & \textbf{23.3} & 80.0 & 81.7\\
+ \ours(1.25,0.25) & \textbf{86.7} & \textbf{97.3} & 77.3 & \textbf{97.3} & \textbf{84.7} & \textbf{96.0} & \textbf{82.7} & \textbf{96.0} & 21.3 & \textbf{88.7} & \textbf{82.8}\\
\hline
\end{tabular}
\caption{\textbf{Results for Robotic Manipulation.} Success rates are computed across 3 seeds, 10 tasks, with 50 rollouts per task. The results are computed with the best $(k,\sigma)$ for~\ours.}
\label{tab:robot}
\end{table*}

\subsection{Robotic Manipulation} \vspace{-1mm}
\label{subsec: robo}
Lastly, we examine the applicability of \ours~on predicting robot actions, with a focus on robotic manipulation. One notable difference of this domain compared to others is that the policy only models a distribution of actions for a short horizon, as it is a sequential decision-making problem. 


We chose Pi-0~\citep{pi0}, a generalist robotic flow-matching policy released by Physical Intelligence, finetuned on LIBERO~\citep{liu2023libero}, a simulation benchmark for robotic manipulation. Specifically, we evaluate the policy over 10 tasks in the LIBERO-10 benchmark with shared $(k,\sigma)$ values. The results are in Table~\ref{tab:robot}. Without any further training,~\ours~improves the performance of 6 tasks and maintains performance for 2 tasks. One notable point is that the two tasks (Task ID 2 and 8) where~\ours~shows worse performance are precisely those in which the base Pi-0 policy itself exhibits low success rates. This suggests that the suboptimal performance may be due to a common `sharpening' ($k>1$) hyper-parameter across tasks as this may be suboptimal when the policy is not correct, and that tuning \ours's $k$ for each task may yield further gains.

\subsection{Parameter Selection Guidelines}
\label{subsec: para selection}

Similar to CFG~\cite{diffuion_cfg}, the hyperparameters $\sigma$ and $\epsilon$ in \ours~ only need to be tuned once per model. Moreover, as shown in Table~\ref{tab:t2img-cross-models}, the same set of hyperparameters can generalize across different models within the same task. To further simplify hyperparameter selection, we provide the following practical guidelines.

As shown in the appendix across multiple tasks (\figref{fig:image-ablation-k}, \figref{fig:pose-ablation}, and \figref{fig:depth-ablation}), the effect of $\sigma$ and $\epsilon$ on generation quality follows a predictable parabola-like trend when varying one parameter while fixing the other. Based on this observation, we first fix $\sigma$ (e.g., $\sigma = 1.0$) and identify the optimal $\epsilon^\ast$ via binary search. We then fix $\epsilon^\ast$ and tune $\sigma$ in the same manner, optionally repeating the process once more if needed. This procedure is usually substantially more efficient than exhaustive grid search.


\section{Discussion}
\seclabel{Discussion}

We presented \ours, an approach to alter the sampling distribution for pre-trained diffusion and flow models. While we demonstrated its efficacy across several (toy and real) tasks, there are fundamental limitations worth highlighting. First, unlike temperature scaling, \ours~can only alter the `local' sampling and there might exist applications where a global temperature scaling is more desirable \eg \ours~does not change the weights of the components in a gaussian mixture, only the variance.  Moreover, while \ours~does empirically steer the sampling diversity in generic scenarios, the theoretical guarantees are limited to simpler settings and one may be able to derive a better algorithm for different distributions. Nevertheless, as \ours~can be readily applied to any off-the-shelf denoising diffusion and flow matching model, we believe it would a generally useful technique for the community to explore. In particular, the alternative strategy of `constant noise scaling' is already adopted across applications~\citep{yim2023se,geffner2025proteina}, and our work offers an alternative that is empirically better and more widely applicable (\eg in deterministic sampling).

\section*{Acknowledgments.} We thank Nicholas M. Boffi, Jun-Yan Zhu, and Gaurav Parmar for the insightful discussion and feedback. This work was supported in part by NSF Awards IIS-2345610 and IIS-2442282
and gifts from Google and CISCO.

\section*{Impact Statement}

This paper proposes an inference-time sampling technique for diffusion and flow models. When applied to off-the-shelf generative models, it may inherit any limitations or risks of the underlying model, such as the propagation of biases present in the training data.

\bibliography{reference}
\bibliographystyle{icml2026}

\newpage
\appendix
\onecolumn

\section{Additional Results} \label{supp: additional_results}

\subsection{Image Generation} \label{supp: image}

We show additional qualitative examples generated by Stable Diffusion 3 with \ours~in \figref{fig:image-qualitative-more}. We show \ours~with varying $k$ and highlight the control over the expression of high-frequency details. To better demonstrate the effect of user-input parameters $(k, \sigma)$ on image quality, we plot FID and CLIP scores ablating over different $(k, \sigma)$ in \figref{fig:image-ablation-k}. The trends in \figref{fig:image-ablation-k} clearly demonstrate that \ours~with a $k$ slightly smaller than 1 and various $\sigma$ values improves both metrics. and the optimal performance is achieved at $k=0.93, \sigma=3.0$. We compare the regular Euler-ODE sampling and \ours~on different CFG guidance scales in \figref{fig:image-ablation-cfgcns}, highlighting that \ours~is orthogonal to CFG and improves model performance at various CFG settings. In \figref{fig:image-ablation-cfgcns}, we also present the performance of CNS, which has to be applied on a stochastic sampler (Euler-SDE). As Stable Diffusion 3 is a flow matching model, stochastic samplers perform significantly worse than ODE samplers, especially when the inference steps are less than 100. These results align with the findings in \cite{ma2024sit}. Therefore, CNS does not practically apply to flow matching models like SD3. In \tableref{tab:SD2-results}, we additionally present quantitative results on Stable Diffusion 2, which is a denoising diffusion-based model. While CNS can improve over DDPM sampling, it is outperformed by \ours.

\begin{figure} [!hbtp]
    \centering
    \includegraphics[width=0.98\linewidth]{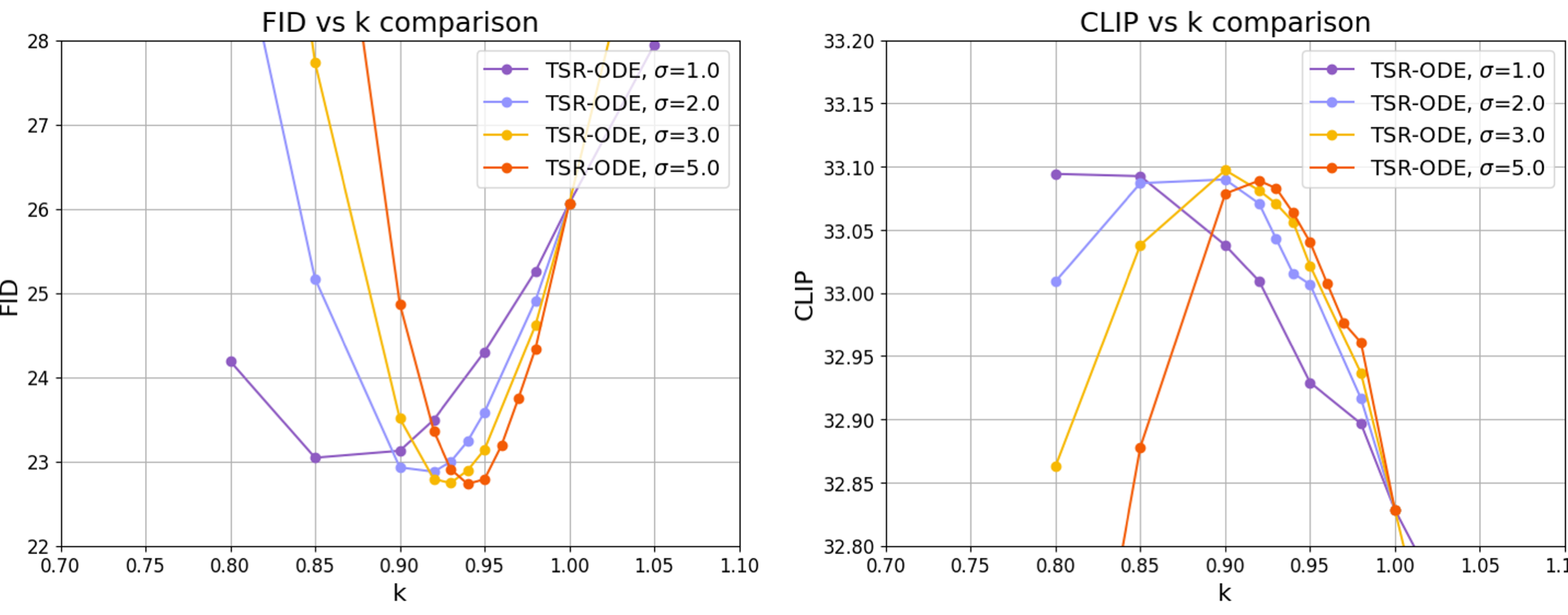}
    \caption{\textbf{Ablations over the \ours~parameters $(k, \sigma)$ on Stable Diffusion 3}}
    \label{fig:image-ablation-k}
\end{figure}

\begin{figure}[!hbtp]
    \centering
    \includegraphics[width=0.98\linewidth]{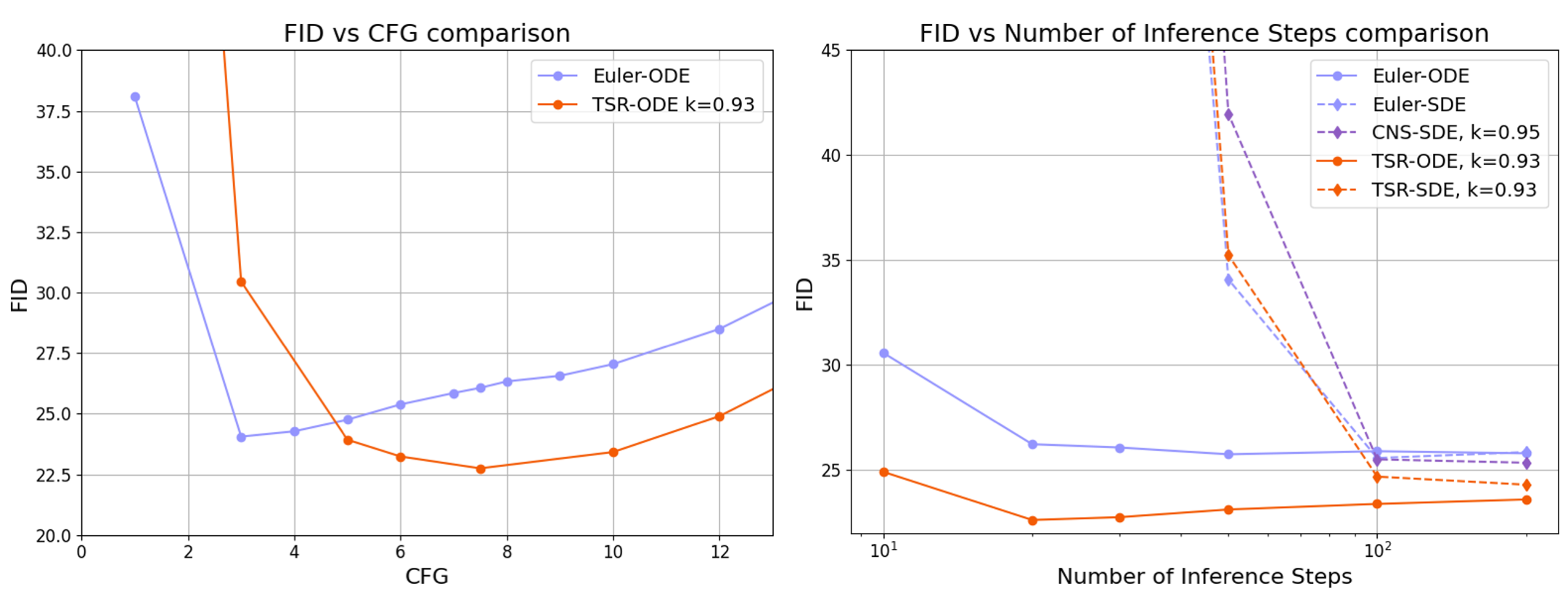}
    \caption{\textbf{Ablations over CFG scale and samplers} \textit{Left}: Comparing regular sampling with \ours~ with various CFG guidance scale on Stable Diffusion 3. \textit{Right}: Comparing deterministic and stochastic samplers with \ours~or CNS. Stochastic sampling is much worse on flow models, making CNS impractical.}
    \label{fig:image-ablation-cfgcns}
\end{figure}

\begin{table}[!h]
    \centering
    \resizebox{0.5\columnwidth}{!}{%
    \begin{tabular}{lcc}
    \toprule
    & FID $\downarrow$ & CLIP $\uparrow$ \\
    \midrule
    DDIM                     & 21.28  & 33.54 \\
    + \ours~(0.95, 1.0)      & \textbf{20.05} & \textbf{33.61} \\
    \midrule
    DDPM                     & 22.81  & 33.66 \\
    + Constant Noise Scaling (0.96)     & 19.87  & 33.68 \\
    + \ours~(0.9, 1.0)       & \textbf{19.57} & \textbf{33.77} \\
    \midrule
    EulerDiscrete            & 22.11  & 33.54 \\
    + \ours~(0.95, 1.0)      & \textbf{19.95} & \textbf{33.61} \\
    \bottomrule
    \end{tabular}}
    \caption{\textbf{Results on Stable Diffusion 2.} \ours~improve FID and CLIP on various samplers, outperforming CNS.}
    \tablelabel{tab:SD2-results}
\end{table}

\begin{figure}
    \centering
    \includegraphics[width=0.98\linewidth]{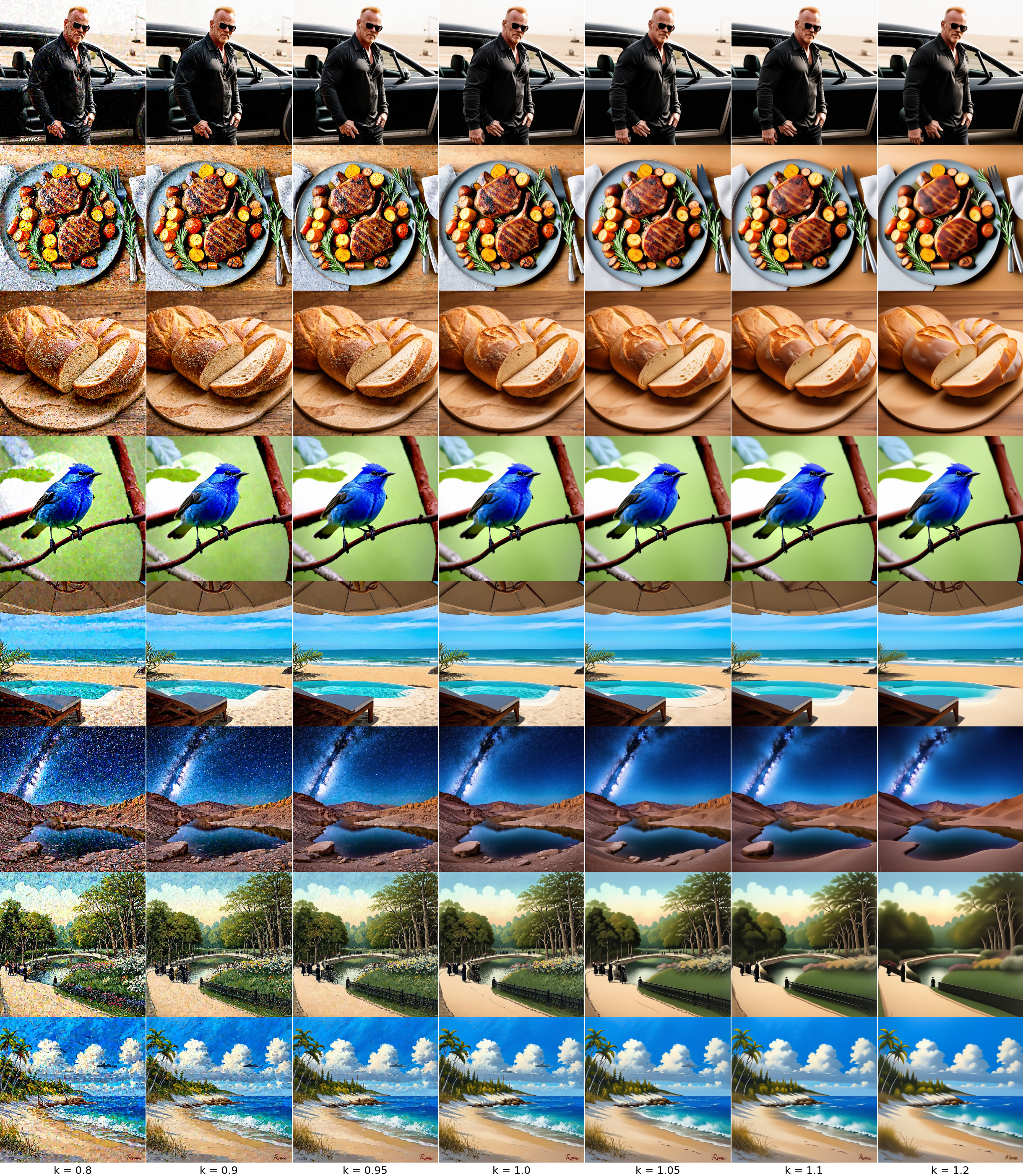}
    \caption{\textbf{Additional qualitative examples of \ours~on Stable Diffusion 3}}
    \label{fig:image-qualitative-more}
\end{figure}

\newpage
\subsection{Pose Prediction} \label{supp: pose}

We show more pose prediction results in \figref{fig:pose-full-visual}. \ours~predicts tighter samples around the ground truth mode, which can be observed by the low spread of sampled poses compared to score sampling. We also include ablations over parameters $(k, \sigma)$ in \figref{fig:pose-ablation}, showing that \ours~ consistently improves accuracy by increasing $k$, across different $\sigma$.

\begin{figure}[!hbtp]
    \centering
    \includegraphics[width=0.5\linewidth]{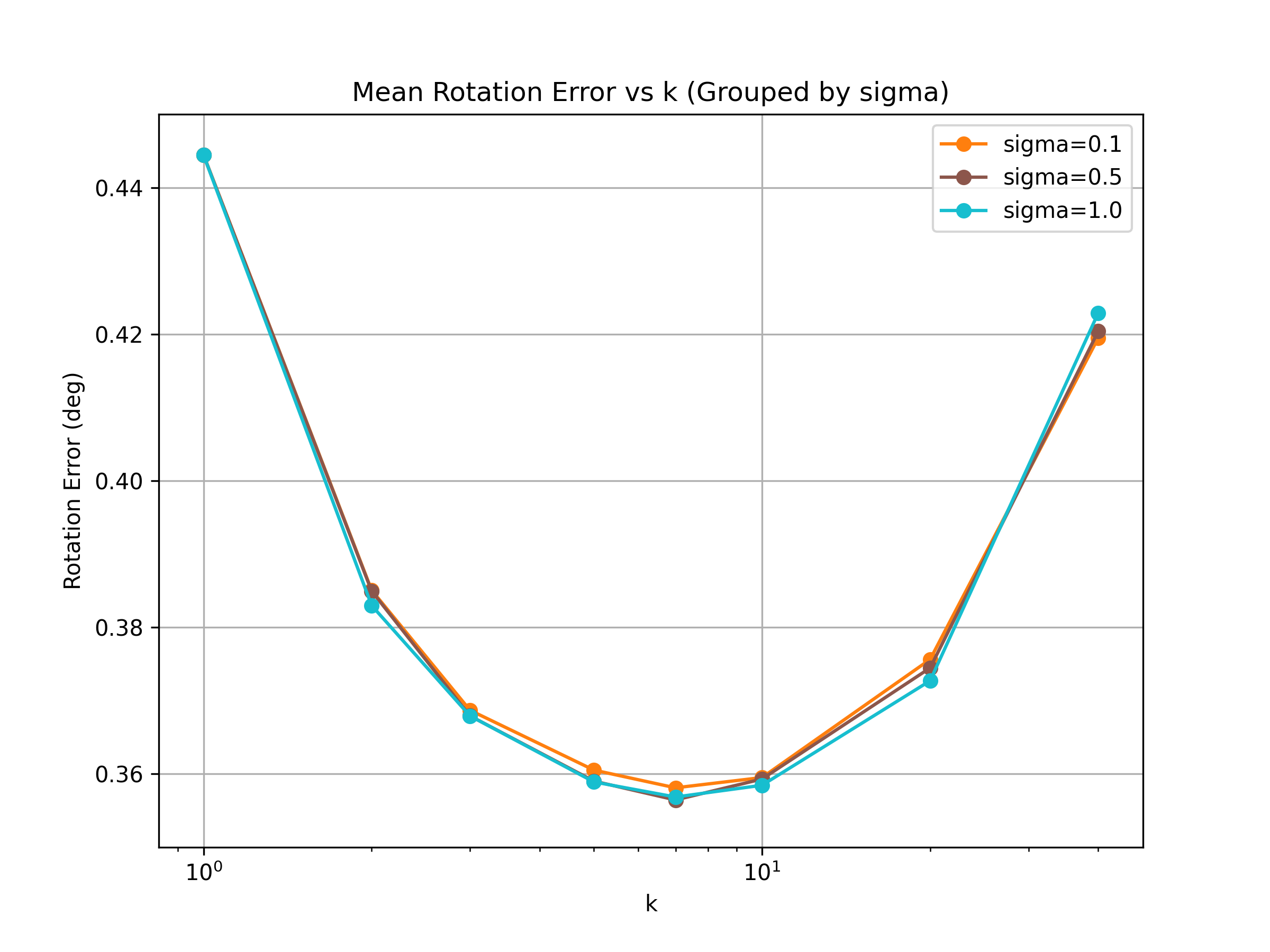}
    \caption{\textbf{Ablations over $(k, \sigma)$ on pose prediction.} \ours~with various $(k, \sigma)$ configurations effectively outperforms the baseline sampling method $k=1$. While \ours~is not sensitive to $\sigma$ in pose estimation, \ours~reaches optimal performance with $k \approx 7$. }
    \label{fig:pose-ablation}
\end{figure}

\begin{figure}
    \centering
    \includegraphics[width=0.9\linewidth]{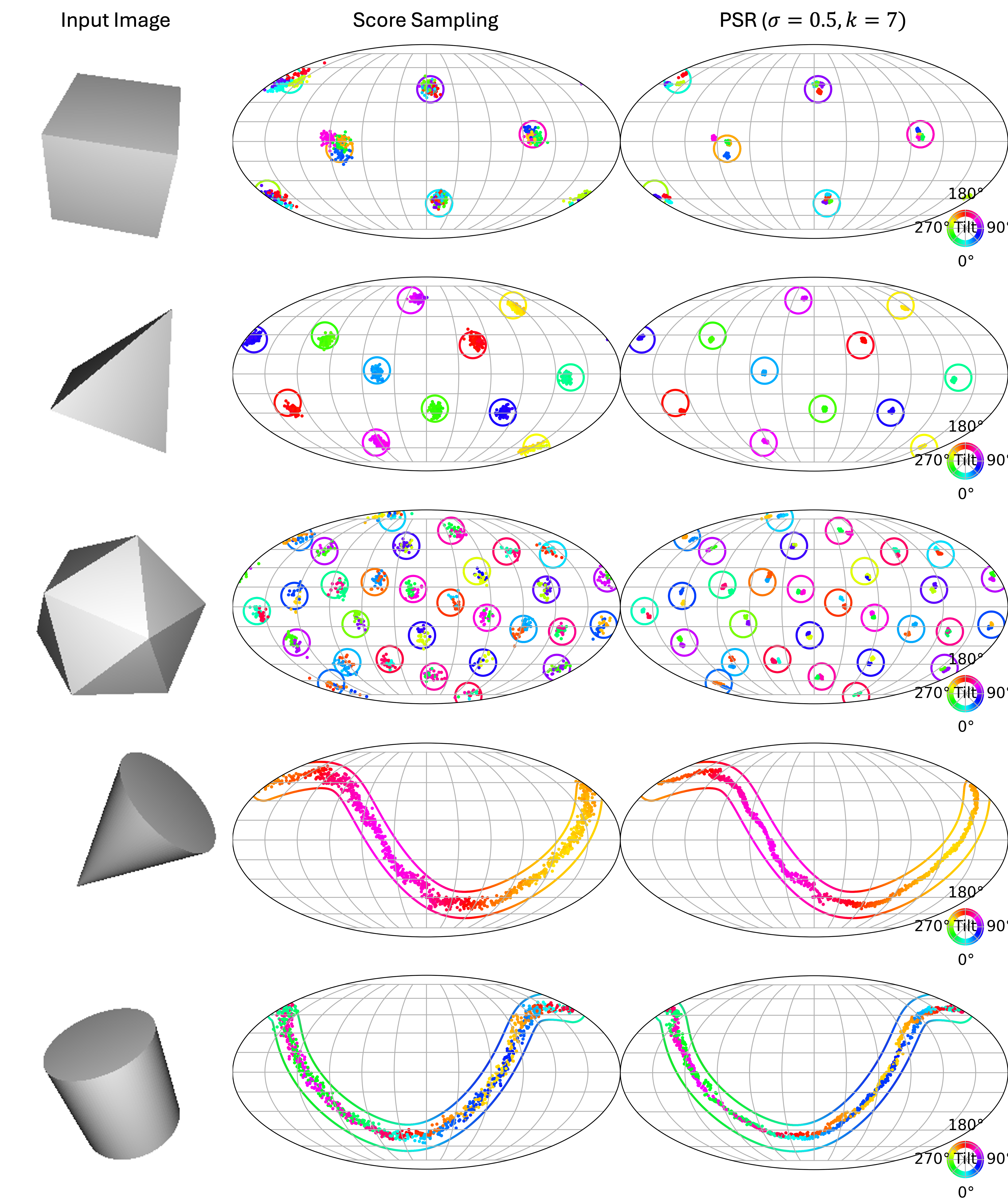}
    \caption{\textbf{More predicted poses on SYMSOL}. We show all 5 classes of shapes in SYMSOL. We use $\sigma = 1, k=7$ for these visualizations. \ours~consistently reduces prediction error across all classes compared to score sampling. We modify the location of samples to exaggerate error by a factor of 15 to show the visual difference given plotting constraints.}
    \label{fig:pose-full-visual}
\end{figure}

\subsection{Depth Estimation} \label{supp: depth}

We also show the effect of $\sigma$ and $k$ on the AbsRel metric in \cref{fig:depth-ablation}. Compared with the DDIM sample ($k=1$), \ours~ demonstrates consistent performance gain in various $(k, \sigma)$ configurations. We also include more depth samples and comparisons \cref{fig:supp-depth}. A consistent improvement of \ours~ result can be observed, compared to the DDIM samples. 

\begin{figure}[!h]
    \centering
    \includegraphics[width=0.5\linewidth]{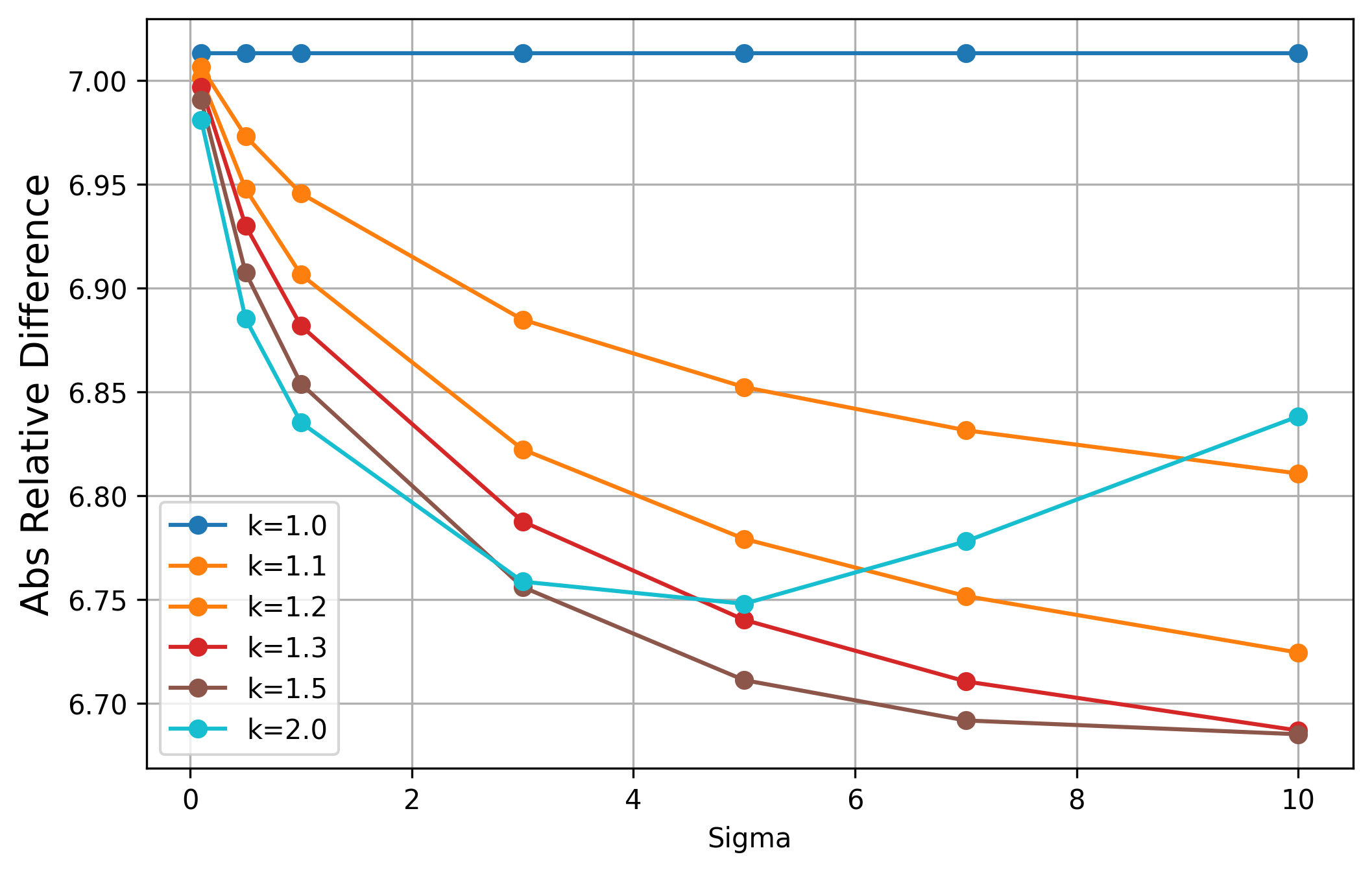}
    \caption{\textbf{Effects of $(k, \sigma)$ on depth estimation.} Comparing with DDIM sample ($k=1$), \ours~demonstrates consistent performance gains in various $(k, \sigma)$ configurations. }
    \label{fig:depth-ablation}
\end{figure}

\begin{figure}[!h]
    \centering
    \includegraphics[width=1.0\linewidth]{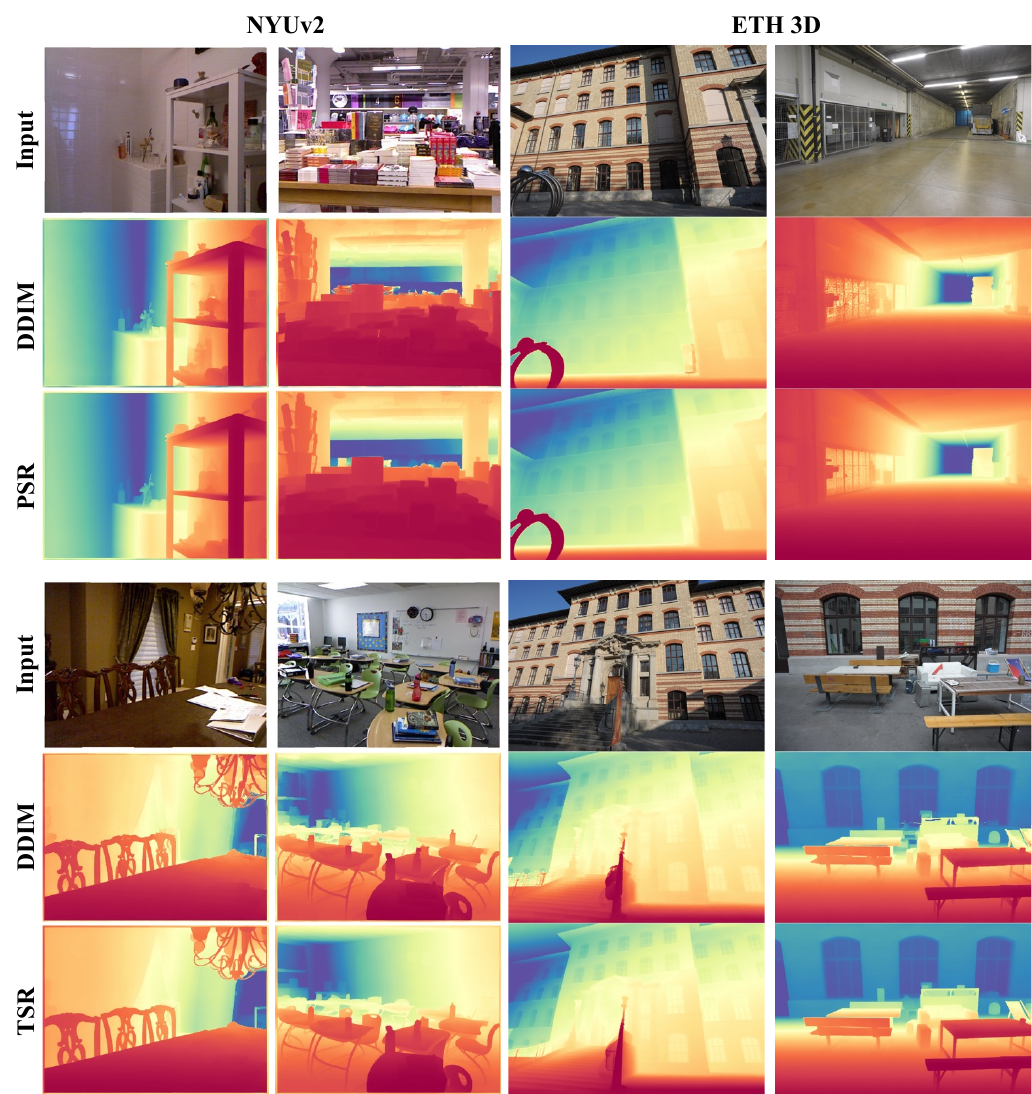}
    \caption{\textbf{More Depth Prediction Comparison.} We include more samples from NYUv2 and ETH3D. PSR demonstrates consistent improvement compared to the DDIM samples. }
    \label{fig:supp-depth}
\end{figure}

\newpage 
\subsection{Quantifying Mode Collapse}\label{supp:mnist}
To systematically evaluate the mode-collapse behavior of temperature-scaling approaches (Constant Noise Scaling and~\ours), we train an unconditional DDPM on the MNIST dataset(~\cite{deng2012mnist}) and apply each sampling method. We additionally train a classifier to label generated samples and assess whether Constant Noise Scaling or~\ours~exhibits mode drop, i.e., produces an imbalanced distribution of digits.

Figures~\ref{fig:supp-mnist1} and~\ref{fig:supp-mnist2} summarize the results, using $k = 5.0$ for Constant Noise Scaling (CNS) and $(k,\sigma) = (5.0, 1.0)$ for~\ours. CNS disproportionately generates digits ‘1’ (40.4\%) and ‘9’ (28.7\%), likely because their straight or curved components appear frequently across other digits, making them easier to synthesize under decreased noise. In contrast,~\ours~produces a distribution of digits that closely matches that of DDPM, indicating that it preserves all modes. Furthermore,~\ours~generates noticeably clearer samples than DDPM, demonstrating the benefit of tempered sampling.

In summary,~\ours~maintains mode coverage on MNIST while improving sample quality.

\begin{figure}[!h]
    \centering
    \includegraphics[width=1.0\linewidth]{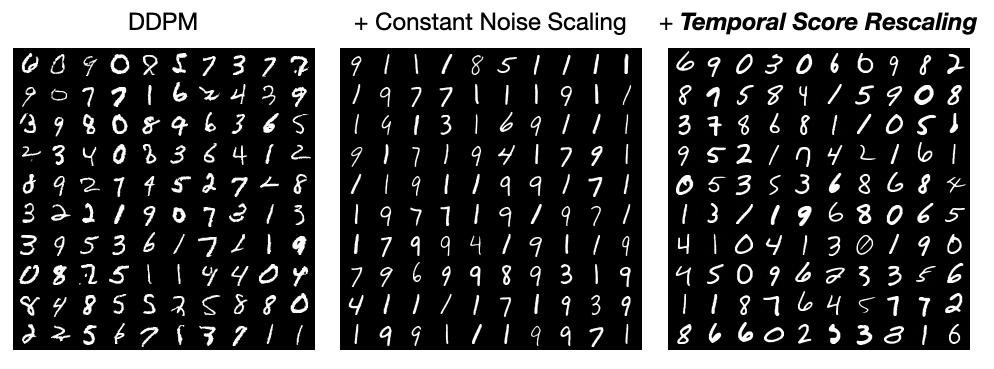}
    \caption{
        Samples generated on MNIST using DDPM, Constant Noise Scaling (CNS), and~\ours. CNS tends to favor generating 1 and 9 while making  
        \ours\ produces clearer digits while preserving diversity across modes.
    }
    \label{fig:supp-mnist1}
\end{figure}

\begin{figure}[!h]
    \centering
    \includegraphics[width=1.0\linewidth]{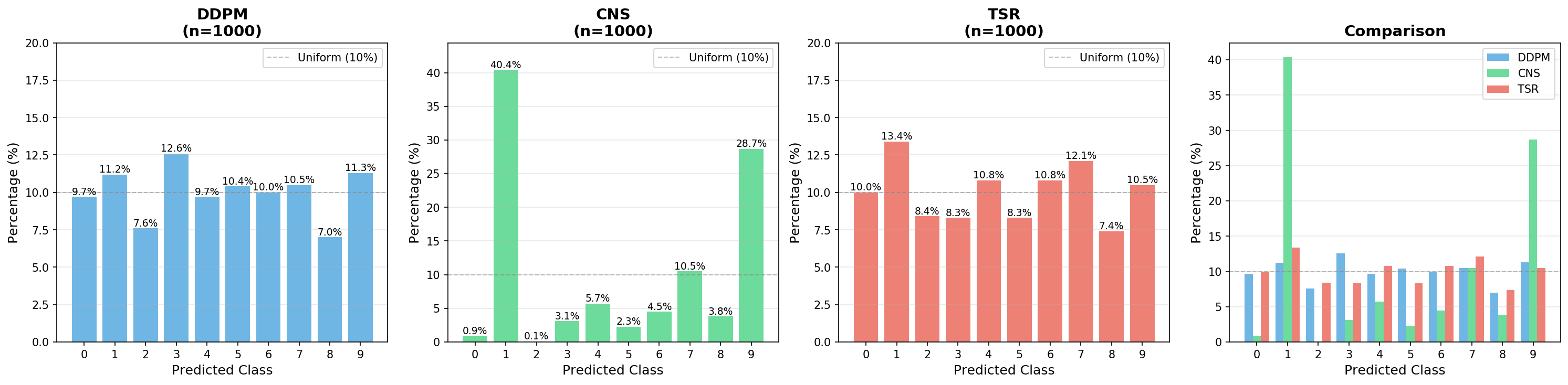}
    \caption{
        Class distribution of generated MNIST samples under DDPM, CNS, and~\ours\ 
        (CNS: $k=5.0$; \ours: $(k,\sigma)=(5.0,1.0)$). CNS exhibits mode imbalance,
        whereas~\ours\ maintains a balanced distribution consistent with the dataset.
    }
    \label{fig:supp-mnist2}
\end{figure}

\newpage
\subsection{Empirical Analysis on Score Approximation}\label{supp:empirical-error}
To empirically analyze the our proposed approximation, we conduct an experiment using a 2D mixture of four Gaussian distributions, which is visualized in \figref{fig:empirical-score-error}. We denote the distance between neighboring modes as $\Delta$ and the variance of each mode as $\sigma$. Setting the scaling parameter $k=2$, we systematically vary $\Delta$ and $\sigma$ to study the behavior of the approximation error. We quantify the deviation by computing the expected absolute relative difference (Abs. Rel.) between the score estimated by TSR and the ground-truth score.

As illustrated in \figref{fig:empirical-score-error}, the error vanishes at both ends in the range of timestep $t$ peaks at intermediate $t$. Furthermore, we analyze the maximum error occurring across all timesteps with respect to $\sigma$ and $\Delta$. The results demonstrate that the maximum error vanishes as the mode variance $\sigma$ decreases or the mode separation $\Delta$ increases, verifying that the approximation becomes exact as the modes are more separated. These results empirically confirm the theoretical bound we proved in \secref{gaussian-mixture-proof}.

\begin{figure}[!h]
    \centering
    \includegraphics[width=1.0\linewidth]{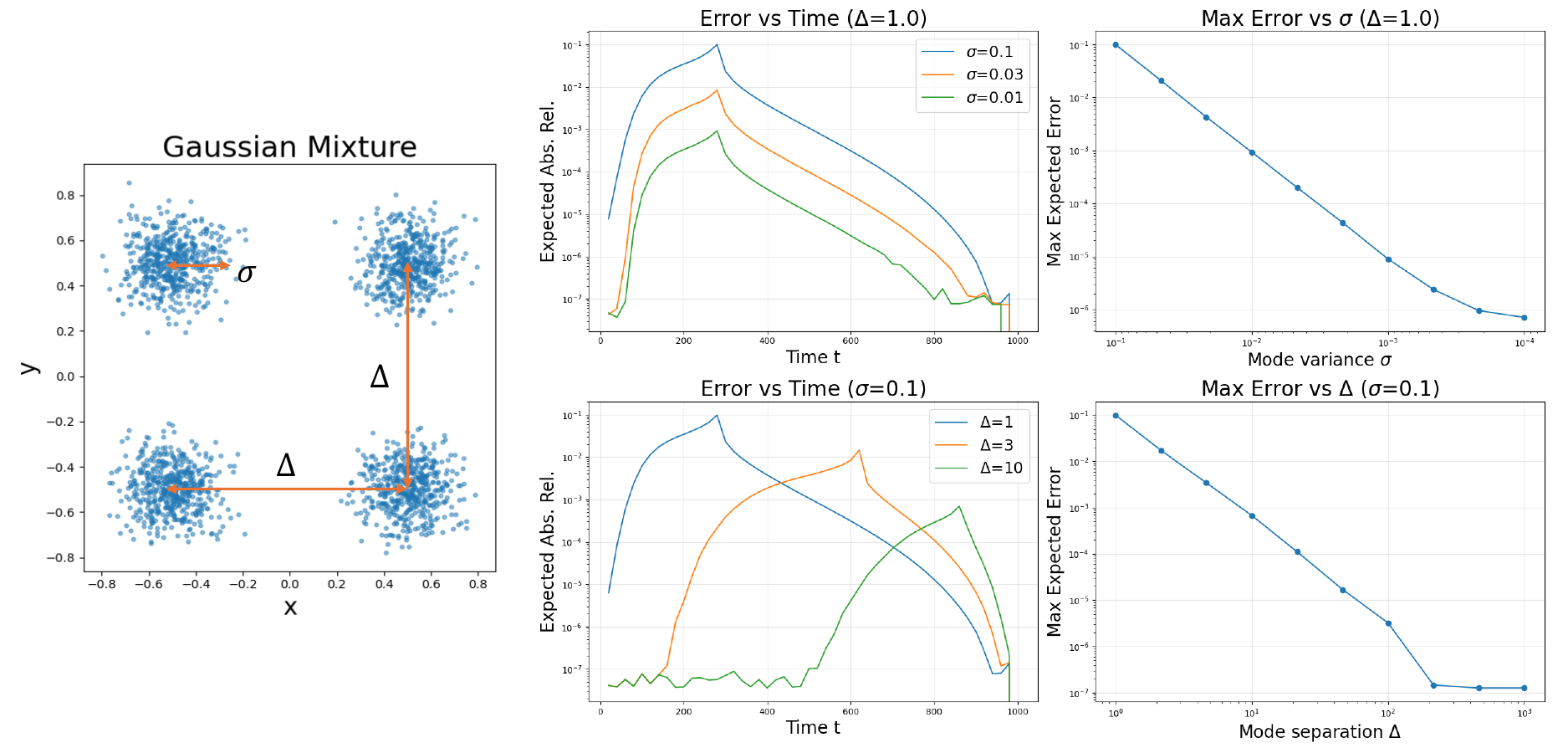}
    \caption{\textbf{Empirical Score Approximation Error}: For the mixture of gaussians depicted in the left, with mode distance $\Delta$ and mode variance $\sigma$, we compute the expected error of TSR approximation at $k=2$. The maximum error is bounded and decreases as $\sigma$ decreases or $\sigma$ increases (right column).}
    \label{fig:empirical-score-error}
\end{figure}

\subsection{Additional Comparison with Constant Score Scaling} \label{supp:css}
A less common method that can have similar effect as CNS in temperature sampling is constant score scaling (CSS). Instead of scaling down the noise term like CNS in \eqref{cns-sde}, CSS constantly scale the score prediciton at each diffusion step, which is equivalent to solving the following reverse SDE:

\begin{equation}
    d\xv = [f(\xv,t) - kg(t)^2 \nabla \log p_t(\xv)]dt + g(t)d\bar{\wv}
    \label{cns-sde}
\end{equation}

This method is adopted by \cite{fkc}. In \figref{fig:css-checkerboard}, we additionally evaluate and compare this method on the checkerboard distribution, observing a similar mode-collapse behavior as CNS. We use the same setup as in \figref{fig:toy_comparison}.

\begin{figure}[!h]
    \centering
    \includegraphics[width=1.0\linewidth]{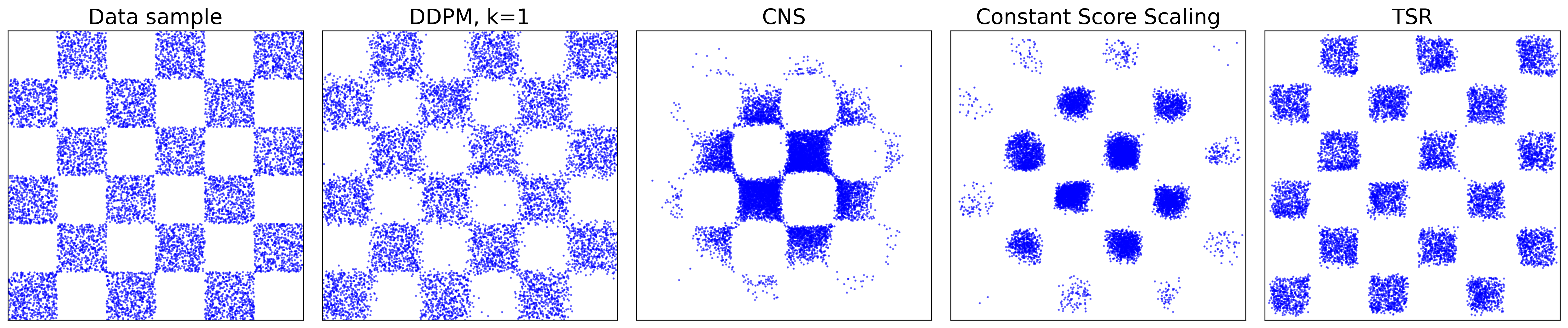}
    \caption{\textbf{Evaluating Constant Score Scaling (CSS)}: On the 2D checkerboard distribution, both CNS and CSS demonstrates mode-dropping behavior, while only \ours~preserves all modes.}
    \label{fig:css-checkerboard}
\end{figure}
\newpage
\section{Proof for TSR for Mixture of well-separated Gaussians}
\label{gaussian-mixture-proof}

We show that for a mixture of well-separated Gaussians, the score approximation in \ours~is valid, with the approximation error vanishing asymptotically.

We begin by introducing the notation and defining the estimation error in \secref{tsr-proof: def}. Our main result is stated in \secref{tsr-proof: theorem}. The proof of this result is given in \secref{tsr-proof: theorem proof}, supported by several lemmas whose proofs are provided in \secref{tsr-proof: lemma proof}.

Notations:
\begin{itemize}
\item $\alpha_t, \sigma_t, k$: Diffusion/flow schedule coefficients and sharpening factor.
\item $p_t^k(\mathbf{x})$: Induced distribution at time $t$ given the data distribution sharpened by $k$.
\item $\Delta \gg \delta$: Distance between two mixture means at $t=0$. Define $\Delta_t = \alpha_t \Delta$.
\item $\sigma$: Variance of each Gaussian in the mixture at $t=0$.
\item $\sigma_{t,k}^2 \equiv \tfrac{\alpha_t^2 \sigma^2}{k} + \sigma_t^2$: Variance of each Gaussian at time $t$ with sharpening factor $k$.
\item $\deltav_{t,n}(\mathbf{x}) \equiv \mathbf{x} - \alpha_t \muv_n$: Offset vector from $\mathbf{x}$ to the center of the $n$-th Gaussian at diffusion time $t$.
\item $p_{t,n}^k(\mathbf{x}) \propto \exp\left(-\tfrac{\|\deltav_{t,n}(\mathbf{x})\|^2}{\sigma_{t,k}^2}\right)$: Unnormalized density of $\mathbf{x}$ under the $n$-th Gaussian.
\item $w_{t,n}^k(\mathbf{x}) \equiv \tfrac{p_{t,n}^k(\mathbf{x})}{\sum_m p_{t,m}^k(\mathbf{x})}$: Responsibility of the $n$-th Gaussian for $\mathbf{x}$.
\item $N$: Number of Gaussians in the mixture. Dependent on the dataset only. 
\item $d$: Dimensionality of the data. i.e. $d=2$ for 2D Gaussian Mixture. 
\item $\Delta_{\max} = \max_{i,j} |\muv_i - \muv_j|$: Maximum pairwise distance between Gaussian means in the mixture. For a general dataset, this term is bounded by $(N-1)\Delta$.

\end{itemize}

\subsection{Error in \ours~ Score Approximation. }\label{tsr-proof: def}

\textbf{Score.}
The score of the original data is given by: 

\[\nabla \log p_t(\mathbf{x}) = -\frac{1}{(\alpha_t^2\sigma^2 + \sigma_t^2)} \sum_n w_{t,n}^1(\mathbf{x}) \deltav_{t,n}(\mathbf{x})\]

For the target distribution $p^k(\mathbf{x}_0) = \sum_i \mathcal{N}(x; \muv_i, \frac{\sigma^2}{k}\mathcal{I})$ the corresponding noisy distribution $p^k(\mathbf{x}_t) = \sum_i \mathcal{N}(x; \alpha_t\muv_i, (\frac{\alpha_t^2\sigma^2}{k} + \sigma_t^2)\mathcal{I})$, we have:

\[\nabla \log p^k_t(\mathbf{x}) = -\frac{1}{(\alpha_t^2\sigma^2/k + \sigma_t^2)} \sum_n w_{t,n}^k(\mathbf{x}) \deltav_{t,n}(\mathbf{x})\]

In \ours, we approximate the score of $p^k(\mathbf{x}_t)$ by:  
\begin{align*}
\nabla\log \tilde{p}^k_t(\xv) &\approx \frac{\alpha_t^2\sigma^2 + \sigma_t^2}{\alpha_t^2\sigma^2/k + \sigma_t^2} \nabla\log p_t(\xv) = \frac{\sigma^2_{t,1}}{\sigma^2_{t,k}}\big(-\frac{1}{\sigma^2_{t,1}} \sum_n w^1_{t,n} \deltav_{t,n}(\xv)\big)  \\
&= -\frac{1}{\sigma^2_{t,k}}\sum_n w^1_{t,n} \deltav_{t,n}(\xv)
\end{align*}

\begin{definition}[Error in TSR Score Approximation]\label{def: error}

Define the amount of error in the score approximation as the expected difference between the scores: 
\begin{equation*}
    Error(t) = \mathbb{E}_{\mathbf{x} \sim p^k_{t}}~\frac{1}{\sigma^2_{t,k}} \|\sum_n (w_{t,n}^1(\mathbf{x})-w_{t,n}^k(\mathbf{x})) \deltav_{t,n}(\mathbf{x}) \| 
\end{equation*}
\end{definition}

\subsection{Upper Bound of the Error} \label{tsr-proof: theorem}

The objective of this proof is to establish a bound on the error term $Error(t)$. Our main results are as follows:

\begin{theorem}[Upper Bound of the Error]\label{thm:bound}
For $Error(t)$, there exists two upper bounds:
\begin{align*}
    Error(t) &\leq B_{\text{exp}} = 6 \cdot\frac{\alpha_t \Delta_{\max}}{\sigma^2_{t,k}} \cdot \exp\left( -\frac{\alpha_t^2\Delta^2}{8\sigma_{t,1}^2} \right) \\
    Error(t) &\leq B_{\text{poly}} = \frac{\alpha_t \Delta_{\max}}{4\sigma^2_{t,k}} \Big(\frac{1}{\sigma_{t,k}^2}-\frac{1}{\sigma_{t,1}^2}\Big)N\big(d\sigma_{t,k}^2 + \alpha_t^2 \Delta_{\max}^2\big)  
\end{align*}
\end{theorem}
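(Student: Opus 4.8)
The plan is to reduce $Error(t)$ to a pointwise estimate on how much the mixture responsibilities shift when the per-component variance is rescaled, and then to bound that shift in two complementary ways. Throughout write $D_n(\mathbf{x}) = \|\deltav_{t,n}(\mathbf{x})\|^2$, $\beta_j = 1/\sigma_{t,j}^2$ for $j\in\{1,k\}$, and let $i=i(\mathbf{x})=\argmin_n D_n(\mathbf{x})$ index the Gaussian nearest $\mathbf{x}$. Since $\sum_n w_{t,n}^1(\mathbf{x}) = \sum_n w_{t,n}^k(\mathbf{x}) = 1$, subtracting the common reference vector $\deltav_{t,i}(\mathbf{x})$ annihilates the $n=i$ term, so that
\[
\Big\|\sum_n\big(w_{t,n}^1-w_{t,n}^k\big)\deltav_{t,n}\Big\| = \Big\|\sum_{n\neq i}\big(w_{t,n}^1-w_{t,n}^k\big)\big(\deltav_{t,n}-\deltav_{t,i}\big)\Big\| \;\leq\; \alpha_t\Delta_{\max}\sum_{n\neq i}\big|w_{t,n}^1-w_{t,n}^k\big|,
\]
using $\deltav_{t,n}-\deltav_{t,i}=\alpha_t(\muv_i-\muv_n)$ and $\|\muv_i-\muv_n\|\leq\Delta_{\max}$. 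Since $\sigma_{t,k}^2$ is deterministic, taking expectations gives $Error(t)\leq \tfrac{\alpha_t\Delta_{\max}}{\sigma_{t,k}^2}\,\mathbb{E}_{\mathbf{x}\sim p_t^k}\big[\sum_{n\neq i}|w_{t,n}^1-w_{t,n}^k|\big]$, so everything reduces to controlling the non-dominant responsibility gap.

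For $B_{\text{exp}}$ I would bound each off-mode responsibility by its density ratio against the dominant component, $w_{t,n}^j(\mathbf{x}) \leq p_{t,n}^j(\mathbf{x})/p_{t,i}^j(\mathbf{x}) = \exp\!\big(-(D_n-D_i)/\sigma_{t,j}^2\big)$, and then lower-bound the gap by well-separation: by the triangle inequality $\|\deltav_{t,n}\|\geq \alpha_t\Delta - \|\deltav_{t,i}\|$, hence $D_n-D_i \geq \alpha_t^2\Delta^2 - 2\alpha_t\Delta\|\deltav_{t,i}\|$. I would partition $\mathbb{R}^d$ into a core $\{\,\|\deltav_{t,i(\mathbf{x})}\|\leq c\,\alpha_t\Delta\,\}$, where this gap is a uniform positive multiple of $\alpha_t^2\Delta^2$ so that $\sum_{n\neq i}(w_{t,n}^1+w_{t,n}^k)\leq 2N\exp(-c'\alpha_t^2\Delta^2/\sigma_{t,1}^2)$ (the slower decay being $j=1$, since $\sigma_{t,1}\geq\sigma_{t,k}$ in the sharpening regime), and its complement, on which I use only the trivial bound $\sum_{n\neq i}(w_{t,n}^1+w_{t,n}^k)\leq 2$ but pay the $p_t^k$-mass of the complement — which a Gaussian norm concentration estimate shows is itself exponentially small in $\alpha_t^2\Delta^2/\sigma_{t,k}^2$, using that $\|\deltav_{t,i}\|$ is at most the distance of $\mathbf{x}$ to its generating center, an $\mathcal{N}(0,\sigma_{t,k}^2 I)$ quantity. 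Adding the two contributions, optimizing the fraction $c$, and noting that for a well-separated mixture the exponential decay swamps the count $N$, yields $6\,\tfrac{\alpha_t\Delta_{\max}}{\sigma_{t,k}^2}\exp(-\alpha_t^2\Delta^2/(8\sigma_{t,1}^2))$. I expect this to factor into two lemmas: a deterministic pointwise responsibility bound on the core, and a tail-mass bound for $p_t^k$.

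For $B_{\text{poly}}$, instead of exploiting exponential decay I would interpolate continuously in the inverse temperature. Writing $w_{t,n}^\beta(\mathbf{x}) = e^{-\beta D_n}/\sum_m e^{-\beta D_m}$, one has $\partial_\beta w_{t,n}^\beta = w_{t,n}^\beta(\bar D^\beta - D_n)$ with $\bar D^\beta = \sum_m w_{t,m}^\beta D_m$, so
\[
\sum_n\big(w_{t,n}^1-w_{t,n}^k\big)\deltav_{t,n} = -\int_{\beta_1}^{\beta_k}\Big(\sum_n w_{t,n}^\beta\big(\bar D^\beta-D_n\big)\deltav_{t,n}\Big)d\beta,
\]
whose interval of integration has length exactly $\beta_k-\beta_1=\tfrac{1}{\sigma_{t,k}^2}-\tfrac{1}{\sigma_{t,1}^2}$ (in the sharpening regime $k\geq1$; the flattening case is symmetric — this is the source of that factor in $B_{\text{poly}}$). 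The inner sum vanishes against a constant (it is, up to sign, a $w^\beta$-covariance of $D_n$ with $\deltav_{t,n}$), so subtracting $\deltav_{t,i}$ bounds it by $\alpha_t\Delta_{\max}\sum_{n\neq i}w_{t,n}^\beta|\bar D^\beta-D_n|$; then $|\bar D^\beta-D_n|\leq D_{\max}(\mathbf{x}):=\max_m D_m(\mathbf{x})$ over the $\leq N$ surviving terms, and finally $\mathbb{E}_{\mathbf{x}\sim p_t^k}[D_{\max}(\mathbf{x})]$ — which, writing $\mathbf{x}=\alpha_t\muv_{m_0}+\sigma_{t,k}\mathbf{g}$ with $\mathbf{g}\sim\mathcal{N}(0,I)$ and $\|\mathbf{x}-\alpha_t\muv_m\|\leq\alpha_t\Delta_{\max}+\sigma_{t,k}\|\mathbf{g}\|$, is bounded by a universal constant times $d\sigma_{t,k}^2+\alpha_t^2\Delta_{\max}^2$ — assembles a bound of the stated form after dividing through by $\sigma_{t,k}^2$.

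The step I expect to be the real work is the constant bookkeeping for $B_{\text{exp}}$: to land the clean exponent $\alpha_t^2\Delta^2/(8\sigma_{t,1}^2)$ one must choose the core radius $c\,\alpha_t\Delta$ so that the deterministic core estimate and the Gaussian tail-mass estimate decay at compatible rates, and then verify that the slack from the triangle inequalities, the factor $2$ from summing the $k{=}1$ and $k$ families, and the count $N$ all collapse into the prefactor $6$; getting this tight, rather than with a worse absolute constant or a spurious $N$, is delicate. A secondary point that must be checked, though it should be harmless, is that the mode nearest $\mathbf{x}$ need not be the component that generated $\mathbf{x}$ under $p_t^k$ — this matters only when estimating the tail mass — and that both bounds degrade gracefully as $t\to1$, where $\alpha_t\to0$, $\deltav_t=\alpha_t\Delta\to0$, the rescaling factor $r_t\to1$, and $Error(t)\to0$.
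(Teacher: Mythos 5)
Your overall structure — reduce $Error(t)$ to an $\ell^1$ shift in responsibilities via a re-centering trick, then bound that shift once exponentially and once polynomially — matches the paper's, but both of your key lemmas take genuinely different routes. Your first reduction is actually \emph{cleaner} than the paper's: the paper's Lemma~B.4 simply replaces $\|\deltav_{t,n}(\xv)\|$ by $\alpha_t\Delta_{\max}$ without comment (which is false pointwise, since $\deltav_{t,n}(\xv)=\xv-\alpha_t\muv_n$ is unbounded in $\xv$); the correct step is exactly the one you make explicit, namely subtracting a reference $\deltav_{t,i}$ using $\sum_n(w^1_{t,n}-w^k_{t,n})=0$ so that only the bounded differences $\alpha_t(\muv_i-\muv_n)$ survive. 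For the exponential bound, the paper does not partition into a core and a tail: it writes the $\ell^1$ triangle inequality through the nearest one-hot vectors, $\|r^{(p)}-r^{(q)}\|_1\le 2(1-r^{(p)}_{i_p})+2\cdot\mathbf{1}\{i_p\neq i_q\}+2(1-r^{(q)}_{i_q})$, ties each piece to a MAP-misclassification probability estimated by a single Gaussian tail, and collects $2+2+2=6$ outright since $\sigma_{t,1}\ge\sigma_{t,k}$. This buys a clean constant without any radius optimization. Your core/tail split is plausible in spirit, but as you already suspected it is unlikely to land exactly on $6\exp(-\alpha_t^2\Delta^2/(8\sigma_{t,1}^2))$: the tail-mass exponent scales with $\sigma_{t,k}^2$, not $\sigma_{t,1}^2$, and carries a dimension-dependent $\sqrt{d}$ shift; and the core responsibility sum picks up an overt factor of $N$ (one $\exp$ per $n\ne i$). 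Both proofs quietly sweep an $N$ from a union bound under the rug, so this is not fatal, but the paper's route hides it more gracefully.

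For the polynomial bound, the paper simply invokes the softmax $\tfrac12$-Lipschitz inequality $\|\operatorname{softmax}(z)-\operatorname{softmax}(z')\|_1\le\tfrac12\|z-z'\|_1$ on the logits $z^j_{t,n}=-\|\deltav_{t,n}\|^2/(2\sigma^2_{t,j})$, then bounds $\mathbb{E}[\sum_n\|\deltav_{t,n}\|^2]\le N(d\sigma_{t,k}^2+\alpha_t^2\Delta_{\max}^2)$ — the source of the explicit $N$. Your fundamental-theorem-of-calculus route through $\partial_\beta w^\beta_n=w^\beta_n(\bar D^\beta-D_n)$ is a genuinely different and in fact \emph{tighter} argument: you bound $\mathbb{E}[D_{\max}]\lesssim d\sigma_{t,k}^2+\alpha_t^2\Delta_{\max}^2$ instead of $\mathbb{E}[\sum_n D_n]$, so no factor $N$ appears. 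Two bookkeeping points, however: (i) the inverse-temperature corresponding to variance $\sigma_{t,j}^2$ under the actual Gaussian kernel is $\beta_j=1/(2\sigma_{t,j}^2)$, not $1/\sigma_{t,j}^2$ as you wrote, so the interval length is $\tfrac12(1/\sigma_{t,k}^2-1/\sigma_{t,1}^2)$; and (ii) the elementary bound on $\mathbb{E}[D_{\max}]$ comes with a leading constant $2$ (from $(a+b)^2\le 2a^2+2b^2$), not $1$. With those corrections your route yields $\frac{\alpha_t\Delta_{\max}}{\sigma_{t,k}^2}\bigl(\tfrac{1}{\sigma_{t,k}^2}-\tfrac{1}{\sigma_{t,1}^2}\bigr)\bigl(d\sigma_{t,k}^2+\alpha_t^2\Delta_{\max}^2\bigr)$ — which proves the stated $B_{\text{poly}}$ only when $N\ge 4$, but is strictly sharper when $N$ is large, and decays at the same polynomial rate, so Theorem~B.3 is unaffected.
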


\begin{theorem}[Vanishing Behavior of Error]\label{thm:main}
Assuming $\sigma = \epsilon \Delta$, when $1-\alpha_t^2 > \sqrt{\epsilon}$, we have:
\begin{equation*}
    B_{\mathrm{poly}} \sim O(\sqrt{\epsilon})
\end{equation*}; When $1-\alpha_t^2 \le \sqrt{\epsilon}$, (i.e. $\alpha_t \approx 1$) we have:
\begin{equation*}
    B_{\mathrm{exp}} \sim O(\frac{1}{\sqrt{\epsilon}\Delta} ~  \exp(-\frac{1}{\sqrt{\epsilon}}))
\end{equation*}
\end{theorem}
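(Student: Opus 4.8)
The plan is to substitute $\sigma=\epsilon\Delta$ into the two bounds of Theorem~\ref{thm:bound} and, in each noise regime, retain whichever bound is small there. I would work throughout with the variance-preserving identity $\sigma_t^2=1-\alpha_t^2$, so that $\sigma_{t,1}^2=(1-\alpha_t^2)+\alpha_t^2\sigma^2$ and $\sigma_{t,k}^2=(1-\alpha_t^2)+\alpha_t^2\sigma^2/k$; in particular both are bounded below by $\max\{1-\alpha_t^2,\ \alpha_t^2\sigma^2/k\}$. The first step is to record the motivating observation that $1-\alpha_t^2=\sqrt\epsilon$ is a genuine crossover: $B_{\mathrm{exp}}$ by itself is only $O(1)$ (not $o(1)$) when $1-\alpha_t^2=\Theta(1)$, while $B_{\mathrm{poly}}$ by itself diverges polynomially in $1/\epsilon$ as $1-\alpha_t^2\to 0$, so both bounds are genuinely needed and the case split in the statement is forced by where each bound becomes useless.

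For $1-\alpha_t^2>\sqrt\epsilon$ I would bound $B_{\mathrm{poly}}$. The only potentially large factor is $\tfrac{1}{\sigma_{t,k}^2}-\tfrac{1}{\sigma_{t,1}^2}=\tfrac{\alpha_t^2\sigma^2(1-1/k)}{\sigma_{t,k}^2\sigma_{t,1}^2}$; using $\sigma_{t,k}^2\sigma_{t,1}^2\ge(1-\alpha_t^2)^2>\epsilon$ and $\alpha_t^2(1-1/k)\le 1$, this is at most $\sigma^2/\epsilon=\epsilon\Delta^2$. One residual reciprocal factor satisfies $1/\sigma_{t,k}^2<\epsilon^{-1/2}$, and the remaining factors $\alpha_t\Delta_{\max}$, $N$, and $d\sigma_{t,k}^2+\alpha_t^2\Delta_{\max}^2\le d(1+\sigma^2)+\Delta_{\max}^2$ are all $O(1)$ once $N,d,\Delta,\Delta_{\max}$ are treated as dataset-dependent constants. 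Multiplying yields $B_{\mathrm{poly}}=O(\epsilon)\cdot O(\epsilon^{-1/2})\cdot O(1)=O(\sqrt\epsilon)$; since every $\alpha_t$-dependent factor is monotone, the worst case is at the boundary $1-\alpha_t^2=\sqrt\epsilon$, so the estimate holds uniformly over this regime. The bookkeeping point to get right is that it is precisely the hypothesis $(1-\alpha_t^2)^2>\epsilon$ that cancels one of the two factors of $\epsilon$ hidden in $\sigma^2=\epsilon^2\Delta^2$, leaving a net $\sqrt\epsilon$; a weaker threshold would degrade the rate.

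For $1-\alpha_t^2\le\sqrt\epsilon$ (hence $\alpha_t^2\ge 1-\sqrt\epsilon\ge\tfrac12$ for small $\epsilon$) I would bound $B_{\mathrm{exp}}$. In the exponent, $\sigma_{t,1}^2=(1-\alpha_t^2)+\alpha_t^2\sigma^2\le\sqrt\epsilon+\epsilon^2\Delta^2\le 2\sqrt\epsilon$ for $\epsilon$ small, so $\exp\!\big(-\tfrac{\alpha_t^2\Delta^2}{8\sigma_{t,1}^2}\big)\le\exp\!\big(-\tfrac{\Delta^2}{32\sqrt\epsilon}\big)$. In the prefactor, $\sigma_{t,k}^2\ge\alpha_t^2\sigma^2/k\ge\tfrac12\epsilon^2\Delta^2/k$, so $\tfrac{6\alpha_t\Delta_{\max}}{\sigma_{t,k}^2}=O\!\big(\epsilon^{-2}\Delta^{-1}\big)$ using $\Delta_{\max}\le(N-1)\Delta$. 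Hence $B_{\mathrm{exp}}=O\!\big(\epsilon^{-2}\Delta^{-1}\exp(-\tfrac{\Delta^2}{32\sqrt\epsilon})\big)$, and since any power of $1/\sqrt\epsilon$ is eventually dominated by $\exp(c/\sqrt\epsilon)$, the $\epsilon^{-3/2}$ excess can be absorbed into the exponential to give the stated form $O\!\big(\tfrac{1}{\sqrt\epsilon\,\Delta}\exp(-\Theta(1)/\sqrt\epsilon)\big)$ (the exact constant in the exponent being data-dependent through $\Delta$).

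The main obstacle is not any individual inequality but the regime management: one must check that the two bounds together cover the entire sampling trajectory, that $B_{\mathrm{poly}}$ is uniformly $O(\sqrt\epsilon)$ for all $\alpha_t$ with $1-\alpha_t^2>\sqrt\epsilon$ rather than only at the crossover, and that the exponential gain in $B_{\mathrm{exp}}$ truly overwhelms its $\epsilon^{-2}$ prefactor blow-up near $t=0$. A secondary subtlety is picking the correct lower bound for $\sigma_{t,k}^2$ in each regime --- $1-\alpha_t^2$ in the polynomial regime and $\alpha_t^2\sigma^2/k$ in the exponential one --- since the alternative choice costs either the $\sqrt\epsilon$ rate or the $1/\Delta$ prefactor.
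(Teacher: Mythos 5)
Your proof is correct and follows the same overall route as the paper: split at the threshold $1-\alpha_t^2 = \sqrt\epsilon$, substitute $\sigma=\epsilon\Delta$ into the two bounds from Theorem~\ref{thm:bound}, and track powers of $\epsilon$. Two places where you actually handle things more carefully than the paper's own proof are worth noting. First, the paper's proof quietly switches the threshold to $1-\alpha_t^2 \gtrless \sqrt\epsilon\,\Delta^2$ (with an extra $\Delta^2$) whereas the theorem states $1-\alpha_t^2 \gtrless \sqrt\epsilon$; you work with the theorem's stated condition and still land on $O(\sqrt\epsilon)$, which makes your argument consistent with the statement being proved. Second, and more substantively, in the exponential regime the paper bounds the prefactor $\tfrac{\alpha_t\Delta_{\max}}{\sigma_{t,k}^2}$ by asserting that $\alpha_t^2\sigma^2/k + (1-\alpha_t^2)$ is ``dominated by'' $1-\alpha_t^2$ and then writing $\sigma_{t,k}^2 \approx \sqrt\epsilon\,\Delta^2$; but near $t=0$ one has $1-\alpha_t^2 \to 0$, so it is the $\alpha_t^2\sigma^2/k$ term that dominates and the paper's substitution is really only valid at the crossover boundary rather than uniformly over the regime. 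You instead use the correct lower bound $\sigma_{t,k}^2 \ge \alpha_t^2\sigma^2/k \ge \tfrac12\epsilon^2\Delta^2/k$, accept the resulting $O(\epsilon^{-2}\Delta^{-1})$ prefactor, and then observe that $\exp(-c/\sqrt\epsilon)$ dominates any fixed power of $1/\epsilon$, so the excess $\epsilon^{-3/2}$ is absorbed into the exponential. That is a cleaner and genuinely uniform bound over the whole interval $1-\alpha_t^2\le\sqrt\epsilon$, and it reaches the same stated order. Your ``regime management'' remark --- that $B_{\mathrm{poly}}$ blows up polynomially near $t=0$ while $B_{\mathrm{exp}}$ is merely $O(1)$ away from it, so both bounds are indispensable --- is a correct reading of why the case split is placed where it is.
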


\textbf{Conclusion.}
Combining \cref{thm:bound} and \cref{thm:main}, when $\epsilon \rightarrow 0$, we have $Error(t) \rightarrow 0$. Therefore, when the Gaussians are well-seperated ($\epsilon \rightarrow 0 $), the approximation error vanishes to 0.

\subsection{Proof of Theorem} \label{tsr-proof: theorem proof}
Before proving the theorems, we first state several lemmas that are useful to the proof, whose proof will be given in the next section.  

\begin{lemma}\label{lem: bound 1}
The \ours~ approximation error $Error(t)$ is bounded as follows: \begin{equation}
    Error(t) \leq \frac{\alpha_t \Delta_{\text{max}}}{\sigma^2_{t,k}} ~\mathbb{E}_{\mathbf{x} \sim p^k_{t}} \| dist(w^1_t(\mathbf{x}), w^k_t(\mathbf{x}))\|
\end{equation}, where $dist(w^1_t(\mathbf{x}), w^k_t(\mathbf{x})) = \sum_n \|w^1_{t,n}(\mathbf{x}) - w^k_{t,n}(\mathbf{x})\| $.

\end{lemma}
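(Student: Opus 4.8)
The plan is to work directly from Definition~\ref{def: error} and exploit the fact that, for every $\mathbf{x}$, both $\{w^1_{t,n}(\mathbf{x})\}_n$ and $\{w^k_{t,n}(\mathbf{x})\}_n$ are probability vectors, so the difference vector $\{w^1_{t,n}(\mathbf{x}) - w^k_{t,n}(\mathbf{x})\}_n$ has entries summing to zero. Consequently, for any vector $\mathbf{c}$ that does not depend on $n$,
\begin{equation*}
\sum_n \big(w^1_{t,n}(\mathbf{x}) - w^k_{t,n}(\mathbf{x})\big)\,\deltav_{t,n}(\mathbf{x})
= \sum_n \big(w^1_{t,n}(\mathbf{x}) - w^k_{t,n}(\mathbf{x})\big)\,\big(\deltav_{t,n}(\mathbf{x}) - \mathbf{c}\big).
\end{equation*}
This re-centering step is the crux of the argument: the raw offsets $\deltav_{t,n}(\mathbf{x}) = \mathbf{x} - \alpha_t\muv_n$ are unbounded in $\mathbf{x}$, but their pairwise differences are not.

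Concretely, I would fix a reference index (say $m=1$) and set $\mathbf{c} = \deltav_{t,1}(\mathbf{x})$, so that $\deltav_{t,n}(\mathbf{x}) - \mathbf{c} = \alpha_t(\muv_1 - \muv_n)$ and hence $\|\deltav_{t,n}(\mathbf{x}) - \mathbf{c}\| \le \alpha_t\Delta_{\text{max}}$ uniformly in $\mathbf{x}$ and $n$, directly from the definition of $\Delta_{\text{max}}$. Applying the triangle inequality to the re-centered sum then gives
\begin{equation*}
\Big\|\sum_n \big(w^1_{t,n}(\mathbf{x}) - w^k_{t,n}(\mathbf{x})\big)\,\deltav_{t,n}(\mathbf{x})\Big\|
\le \alpha_t\Delta_{\text{max}} \sum_n \big|w^1_{t,n}(\mathbf{x}) - w^k_{t,n}(\mathbf{x})\big|
= \alpha_t\Delta_{\text{max}}\,\|dist(w^1_t(\mathbf{x}), w^k_t(\mathbf{x}))\|.
\end{equation*}

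To conclude, I would divide both sides by $\sigma^2_{t,k}$ and take the expectation over $\mathbf{x}\sim p^k_t$; by monotonicity and linearity of expectation this yields $Error(t) \le \frac{\alpha_t\Delta_{\text{max}}}{\sigma^2_{t,k}}\,\mathbb{E}_{\mathbf{x}\sim p^k_t}\|dist(w^1_t(\mathbf{x}), w^k_t(\mathbf{x}))\|$, which is the claim. I do not anticipate a real obstacle here — the whole proof is a triangle-inequality estimate — so the only thing worth flagging is that one must perform the re-centering \emph{before} bounding, since otherwise the $\mathbf{x}$-dependence of $\|\deltav_{t,n}(\mathbf{x})\|$ cannot be absorbed into $\Delta_{\text{max}}$; the particular choice of reference index is immaterial, and one could equally well subtract a responsibility-weighted mean of the offsets.
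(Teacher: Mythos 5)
Your proof is correct, and it is in fact more careful than the paper's own argument. The paper's proof pulls the norm inside the sum to get $\sum_n |w^1_{t,n}(\xv)-w^k_{t,n}(\xv)|\,\|\deltav_{t,n}(\xv)\|$ and then replaces $\|\deltav_{t,n}(\xv)\|$ by $\alpha_t\Delta_{\max}$; but $\deltav_{t,n}(\xv)=\xv-\alpha_t\muv_n$ is unbounded in $\xv$, so $\|\deltav_{t,n}(\xv)\|\le\alpha_t\Delta_{\max}$ does not hold pointwise and that substitution is not justified as written. Your re-centering step — using $\sum_n\bigl(w^1_{t,n}(\xv)-w^k_{t,n}(\xv)\bigr)=0$ to subtract a common reference offset $\mathbf{c}=\deltav_{t,1}(\xv)$, so that only the genuinely bounded quantities $\deltav_{t,n}(\xv)-\mathbf{c}=\alpha_t(\muv_1-\muv_n)$ with $\|\alpha_t(\muv_1-\muv_n)\|\le\alpha_t\Delta_{\max}$ appear — is exactly the missing observation. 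Both arguments then finish with the same triangle inequality and linearity of expectation, so the skeleton is the same, but your version supplies the lemma's real content; the paper's can at best be read as tacitly assuming the re-centering without saying so.
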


\begin{lemma}\label{lem: exp bound}
There exists a polynomial bound for $~\mathbb{E}_{\mathbf{x} \sim p^k_{t}} \| dist(w^1_t(\mathbf{x}), w^k_t(\mathbf{x}))\|$:
\begin{align*}
    \mathbb{E}_{\mathbf{x} \sim p^k_{t}} \| dist(w^1_t(\mathbf{x}), w^k_t(\mathbf{x}))\| \le 6 \cdot \exp\left(-\frac{\alpha_t^2\Delta^2}{8\sigma_{t,1}^2}\right)
\end{align*}
\end{lemma}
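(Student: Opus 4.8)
\emph{Proof plan.} The plan is to prove the bound by conditioning on which mixture component produced the sample. Since $p^k_t$ is the uniform mixture of the $N$ Gaussians $\mathcal{N}(\alpha_t\muv_n,\sigma_{t,k}^2\mathbf{I})$, I would draw $\mathbf{x}$ from the $n$-th component as $\mathbf{x}=\alpha_t\muv_n+\sigma_{t,k}\mathbf{z}$ with $\mathbf{z}\sim\mathcal{N}(0,\mathbf{I}_d)$ and average over $n$ at the end (harmless, since the component weights sum to one). The first move is purely simplex-geometric: $w^1_t(\mathbf{x})$ and $w^k_t(\mathbf{x})$ are both probability vectors on $\{1,\dots,N\}$, so
\[
dist\bigl(w^1_t(\mathbf{x}),w^k_t(\mathbf{x})\bigr) = \sum_m \bigl|w^1_{t,m}(\mathbf{x})-w^k_{t,m}(\mathbf{x})\bigr| \le 2\bigl(1-w^1_{t,n}(\mathbf{x})\bigr)+2\bigl(1-w^k_{t,n}(\mathbf{x})\bigr),
\]
which reduces everything to bounding the ``leaked mass'' $1-w^j_{t,n}(\mathbf{x})=\sum_{m\ne n}w^j_{t,m}(\mathbf{x})$ for $j\in\{1,k\}$ in expectation over $\mathbf{z}$.

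For the pointwise estimate I would use $w^j_{t,m}(\mathbf{x})\le p^j_{t,m}(\mathbf{x})/p^j_{t,n}(\mathbf{x}) = \exp\bigl(-(\|\deltav_{t,m}(\mathbf{x})\|^2-\|\deltav_{t,n}(\mathbf{x})\|^2)/\sigma_{t,j}^2\bigr)$ together with the identity
\[
\|\deltav_{t,m}(\mathbf{x})\|^2-\|\deltav_{t,n}(\mathbf{x})\|^2 = \alpha_t^2\|\muv_n-\muv_m\|^2 + 2\alpha_t\sigma_{t,k}\langle\mathbf{z},\muv_n-\muv_m\rangle \ge \alpha_t^2\Delta^2 - 2\alpha_t\sigma_{t,k}\Delta_{\max}\|\mathbf{z}\|,
\]
using $\|\muv_n-\muv_m\|\ge\Delta$ and Cauchy--Schwarz. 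Hence on the ``core'' event $\{\|\mathbf{z}\|\le\rho\}$, with $\rho$ chosen so that the subtracted term is at most $\tfrac12\alpha_t^2\Delta^2$, one obtains the deterministic bound $1-w^j_{t,n}(\mathbf{x})\le (N-1)\exp\bigl(-\alpha_t^2\Delta^2/(2\sigma_{t,j}^2)\bigr)$, and since $k\ge1$ gives $\sigma_{t,k}^2\le\sigma_{t,1}^2$ the case $j=k$ is dominated by $j=1$ (the flattening case $k\le 1$ being symmetric).

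It then remains to split the expectation over $\mathbf{z}$: on the core event use the deterministic bound just derived, and on its complement $\{\|\mathbf{z}\|>\rho\}$ use the trivial $dist\le2$ together with the Gaussian tail estimate $\Pr(\|\mathbf{z}\|>\rho)\le e^{-(\rho-\sqrt d)^2/2}$. With $\rho$ set as above one has $\rho\gtrsim \alpha_t\Delta/\bigl(\sigma_{t,k}(N-1)\bigr)$, which is large compared to $\sqrt d$ exactly in the well-separated regime $\delta\ll\Delta$, so the tail term is itself of order $\exp(-c\,\alpha_t^2\Delta^2/\sigma_{t,1}^2)$; adding the two contributions and relaxing the exponent from $\tfrac12$ to $\tfrac18$ (to absorb the $(N-1)$, $d$, and $\Delta_{\max}/\Delta$ prefactors) produces the claimed $6\exp\bigl(-\alpha_t^2\Delta^2/(8\sigma_{t,1}^2)\bigr)$. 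A convenient device for discharging the prefactor bookkeeping is a case split: whenever $\exp\bigl(-\alpha_t^2\Delta^2/(8\sigma_{t,1}^2)\bigr)\ge\tfrac13$ the inequality $6\exp(\cdot)\ge2$ is immediate from $dist\le2$, and otherwise $\sigma_{t,1}$ is small enough that the shrinking exponential swallows the fixed polynomial factors.

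The step I expect to be the main obstacle is precisely this reconciliation of constants. The cross term $2\alpha_t\sigma_{t,k}\langle\mathbf{z},\muv_n-\muv_m\rangle$ can make component $n$ fail to dominate when $\|\mathbf{z}\|$ is atypically large, so a naive moment-generating bound on $\mathbb{E}\,e^{c\|\mathbf{z}\|}$ over all of $\mathbb{R}^d$ is too lossy for the $j=k$ term; one must truncate at the core radius $\rho$ and verify that the discarded tail, the $(N-1)$ factor from the union over other components, and the $\sqrt d$ coming from $\mathbb{E}\|\mathbf{z}\|$ are all absorbed into the slack between $\exp\bigl(-\alpha_t^2\Delta^2/(2\sigma_{t,1}^2)\bigr)$ and the target $\exp\bigl(-\alpha_t^2\Delta^2/(8\sigma_{t,1}^2)\bigr)$. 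This is where the well-separation hypothesis is consumed, and the case split above is what keeps it routine rather than delicate.
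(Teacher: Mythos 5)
Your proof follows the paper's core strategy — bound the $\ell^1$ distance between the two responsibility vectors by anchoring at a one-hot vector and estimating the mass leaked to non-anchor components via Gaussian tails — but with a cleaner anchor choice. The paper anchors at the \emph{argmax}-responsibility indices $i_p,i_q$ and therefore needs the three-term triangle inequality $D\le\|r^{(p)}-e_{i_p}\|_1+\|e_{i_p}-e_{i_q}\|_1+\|e_{i_q}-r^{(q)}\|_1$ together with a separate bound on $\Pr(i_p\ne i_q)$; you instead anchor at the \emph{generating} mixture component $n$, obtaining the two-term bound $dist\le 2(1-w^1_{t,n})+2(1-w^k_{t,n})$ and dispensing with the argmax bookkeeping entirely. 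You are also more explicit about where the concentration comes from: you split the expectation over a core event $\{\|\mathbf{z}\|\le\rho\}$ on which a deterministic ratio bound applies, versus a tail event covered by $dist\le 2$, whereas the paper invokes an unproved concentration claim $\mathbb{E}[1-\max_j r_j^{(p)}]\le\epsilon$ followed by the stated approximation $\epsilon\approx\exp(-\Delta^2/8\sigma^2)$. Both arguments leave the constants informal, and your proposed fix is the weaker point: a threshold split at $\exp(-\alpha_t^2\Delta^2/(8\sigma_{t,1}^2))\ge\tfrac{1}{3}$ does not, on its own, absorb prefactors of size $N-1$, $d$, or $\Delta_{\max}/\Delta$, since just below that threshold you gain only $\exp(-3\alpha_t^2\Delta^2/(8\sigma_{t,1}^2))<1/27$ of slack, which need not cover a large $N$. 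That said, this is the same implicit quantitative well-separation assumption the paper's own proof leans on, so your sketch is on comparable footing with the paper's and is arguably tighter in its decomposition.
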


\begin{lemma}\label{lem: poly bound}
There exists an exponential bound for $~\mathbb{E}_{\mathbf{x} \sim p^k_{t}} \| dist(w^1_t(\mathbf{x}), w^k_t(\mathbf{x}))\|$:
\begin{align*}
    \mathbb{E}_{\mathbf{x} \sim p^k_{t}} \| dist(w^1_t(\mathbf{x}), w^k_t(\mathbf{x}))\| \le \frac{1}{4}\Big(\frac{1}{\sigma_{t,k}^2}-\frac{1}{\sigma_{t,1}^2}\Big) N\big(d\sigma_{t,k}^2 + \alpha_t^2 \Delta_{\max}^2\big)
\end{align*}

\end{lemma}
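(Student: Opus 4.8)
The plan is to exploit the fact that, for every fixed $\mathbf{x}$, the two responsibility vectors $w^1_t(\mathbf{x})$ and $w^k_t(\mathbf{x})$ are Gibbs (softmax) distributions over the \emph{same} logit vector $\ell(\mathbf{x}) = \bigl(-\|\deltav_{t,n}(\mathbf{x})\|^2\bigr)_{n=1}^N$, differing only in an inverse temperature: $w_{t,n}^j(\mathbf{x}) \propto \exp\!\bigl(-\|\deltav_{t,n}(\mathbf{x})\|^2/(2\sigma^2_{t,j})\bigr)$ for $j\in\{1,k\}$, so with $\beta_j := 1/(2\sigma^2_{t,j})$ we have $w^j_t(\mathbf{x}) = \softmax(\beta_j\,\ell(\mathbf{x}))$ and $\beta_1 < \beta_k$ (since $\sigma^2_{t,k} < \sigma^2_{t,1}$ when $k>1$). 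Letting $w(\beta) = \softmax(\beta\,\ell(\mathbf{x}))$, I would use the elementary identity $\partial_\beta w_n(\beta) = w_n(\beta)\bigl(\ell_n - \bar\ell_{w(\beta)}\bigr)$, where $\bar\ell_w = \sum_m w_m \ell_m$, which gives $\|\partial_\beta w(\beta)\|_1 = \sum_n w_n(\beta)\,|\ell_n - \bar\ell_{w(\beta)}|$, i.e.\ the weighted mean absolute deviation of the logits under $w(\beta)$.

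The next step is to bound this mean absolute deviation uniformly in $\beta$: by Jensen it is at most the weighted standard deviation of $\ell$, and by Popoviciu's inequality the variance of a quantity supported on $[\min_n\ell_n,\max_n\ell_n]$ is at most a quarter of the squared range, so $\|\partial_\beta w(\beta)\|_1 \le \tfrac12\bigl(\max_n\|\deltav_{t,n}(\mathbf{x})\|^2 - \min_n\|\deltav_{t,n}(\mathbf{x})\|^2\bigr) \le \tfrac12\max_n\|\deltav_{t,n}(\mathbf{x})\|^2$, the last step using $\|\deltav_{t,n}(\mathbf{x})\|^2 \ge 0$. Since this estimate is independent of $\beta$, integrating $w^k_t(\mathbf{x}) - w^1_t(\mathbf{x}) = \int_{\beta_1}^{\beta_k} \partial_\beta w(\beta)\,d\beta$ and applying the triangle inequality gives, for every $\mathbf{x}$, $\mathrm{dist}\bigl(w^1_t(\mathbf{x}),w^k_t(\mathbf{x})\bigr) = \|w^k_t(\mathbf{x}) - w^1_t(\mathbf{x})\|_1 \le (\beta_k-\beta_1)\,\tfrac12\max_n\|\deltav_{t,n}(\mathbf{x})\|^2$, where $\beta_k - \beta_1 = \tfrac12\bigl(\tfrac1{\sigma^2_{t,k}} - \tfrac1{\sigma^2_{t,1}}\bigr)$.

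Finally I would take expectations over $\mathbf{x}\sim p^k_t$. Bounding $\max_n\|\deltav_{t,n}(\mathbf{x})\|^2 \le \sum_n\|\deltav_{t,n}(\mathbf{x})\|^2$ and recalling that $p^k_t$ is a mixture of the Gaussians $\mathcal{N}(\alpha_t\muv_j,\sigma^2_{t,k}\Iv)$, $j=1,\dots,N$, the bias--variance identity $\mathbb{E}\|\mathbf{x}-\alpha_t\muv_n\|^2 = d\,\sigma^2_{t,k} + \alpha_t^2\|\muv_j-\muv_n\|^2 \le d\,\sigma^2_{t,k} + \alpha_t^2\Delta_{\max}^2$ for $\mathbf{x}\sim\mathcal{N}(\alpha_t\muv_j,\sigma^2_{t,k}\Iv)$ gives, after averaging over the mixture, $\mathbb{E}_{\mathbf{x}\sim p^k_t}\|\deltav_{t,n}(\mathbf{x})\|^2 \le d\,\sigma^2_{t,k} + \alpha_t^2\Delta_{\max}^2$ for every $n$. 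Summing over the $N$ components and substituting $\beta_k-\beta_1 = \tfrac12(\tfrac1{\sigma^2_{t,k}}-\tfrac1{\sigma^2_{t,1}})$ then yields exactly the claimed inequality; the constant $\tfrac14$ is the product of the $\tfrac12$ from Popoviciu's bound and the $\tfrac12$ from the Gaussian exponent $1/(2\sigma^2)$. Combined with \cref{lem: bound 1}, this establishes the $B_{\mathrm{poly}}$ bound of \cref{thm:bound}.

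The one genuinely lossy step is $\max_n(\cdot)\le\sum_n(\cdot)$, which is what produces the factor $N$; if a sharper dependence were needed one could instead control $\mathbb{E}_{\mathbf{x}\sim p^k_t}\bigl[\max_n\|\deltav_{t,n}(\mathbf{x})\|^2\bigr]$ via sub-Gaussian tail bounds on $\|\mathbf{x}-\alpha_t\muv_n\|^2$, but since \cref{thm:main} only requires $Error(t)\to 0$ as $\epsilon\to 0$, the crude union-type bound already suffices. Everything else --- the softmax temperature-derivative identity, Popoviciu's inequality, and the Gaussian second-moment computation --- is routine, so I expect no real obstacle beyond keeping the normalization constants straight.
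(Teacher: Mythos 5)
Your proof is correct and reaches the stated bound, but it goes through a genuinely different decomposition than the paper's. The paper treats $w^1_t$ and $w^k_t$ as $\softmax(z^1_t)$ and $\softmax(z^k_t)$ for two different logit vectors, invokes the global $\ell_1$-Lipschitz constant $\tfrac12$ of softmax, and directly computes $\|z^k_t-z^1_t\|_1 = \tfrac12\bigl(\tfrac1{\sigma_{t,k}^2}-\tfrac1{\sigma_{t,1}^2}\bigr)\sum_n\|\deltav_{t,n}(\xv)\|^2$, whereas you view the two as a one-parameter family $w(\beta)=\softmax(\beta\,\ell(\xv))$, differentiate in the inverse temperature $\beta$, identify $\|\partial_\beta w\|_1$ as a weighted mean absolute deviation of the logits, and control it uniformly via Jensen and Popoviciu. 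Your route produces the intermediate pointwise bound $\tfrac14\bigl(\tfrac1{\sigma_{t,k}^2}-\tfrac1{\sigma_{t,1}^2}\bigr)\max_n\|\deltav_{t,n}(\xv)\|^2$, which is strictly tighter than the paper's $\sum_n$ version before you relax $\max\le\sum$ to match the stated form; in exchange, the paper's argument is more elementary (a single black-box Lipschitz estimate rather than a derivative identity plus Popoviciu). Both then finish identically with the Gaussian bias--variance computation $\E_{\xv\sim\mathcal N(\alpha_t\muv_j,\sigma_{t,k}^2\Iv)}\|\deltav_{t,n}(\xv)\|^2 = d\sigma_{t,k}^2 + \alpha_t^2\|\muv_j-\muv_n\|^2$ and the crude $\|\muv_j-\muv_n\|\le\Delta_{\max}$, so the two proofs are interchangeable here; note only that, like the paper, your argument implicitly assumes $k>1$ so that $\beta_k>\beta_1$ and the factor $\bigl(\tfrac1{\sigma_{t,k}^2}-\tfrac1{\sigma_{t,1}^2}\bigr)$ is nonnegative, which is the regime the lemma is stated for.
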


\begin{proof}[Proof of Theorem~\ref{thm:bound}]

Combining Lemma \ref{lem: bound 1} and Lemma \ref{lem: exp bound}, we obtain the polynomial bound for $Error(t)$. 

Similarly, Lemma \ref{lem: bound 1} and Lemma \ref{lem: poly bound} will give us the exponential bound for $Error(t)$.

\end{proof}

\begin{proof}[Proof of Theorem~\ref{thm:main}]

For simplicity, we assume diffusion scheduling, that is, $\sigma_t^2 = 1- \alpha_t^2$ in this part. We also assume $\sigma = \epsilon \Delta$. As the dataset is fixed, we can rewrite $\Delta_{\max} = c \Delta$, where $c$ is a constant that only depends on the dataset. 

\textbf{Vanishing of Polynomial Bound}

Following the polynomial bound from ~\ref{thm:bound}, we have:
\begin{align*}
    B_{\text{poly}} &= N\frac{\alpha_t \Delta_{\max}}{\sigma^2_{t,k}}(\frac{(1-1/k)\sigma^2\alpha_t^2}{\sigma_{t,1}^2 \sigma_{t,k}^2})(d\sigma_{t,1}^2 + \alpha_t^2 \Delta_{\max}^2) \\
    &= N(1-1/k)\frac{\alpha^3_t \Delta_{\max} \sigma^2}{\sigma^4_{t,k}}(d +  \frac{\alpha_t^2\Delta_{\max}^2}{\sigma_{t,1}^2})
\end{align*}

Consider $1-\alpha_t^2 > \sqrt{\epsilon} \Delta^2$, we have: $\sigma^2_{t,k} = \alpha_t^2\sigma^2/k + (1-\alpha^2_t) > (1-\alpha^2_t) > \sqrt{\epsilon} \Delta^2$. Therefore, we have:
\begin{align*}
    \frac{\alpha^3_t \Delta_{\max} \sigma^2}{\sigma^4_{t,k}}d \leq \frac{c\alpha_t^3 \epsilon^2 \Delta^3}{\epsilon \Delta^4}d  = \alpha_t^3\,cd\,\frac{\epsilon}{\Delta} 
\end{align*} Since $\alpha_t\le 1$ and $c$ and $d$ are constant given a dataset, we can absorb them into a constant. Therefore, 
$\frac{\alpha_t^3 c\Delta_{\max} \sigma^2}{\sigma_{t,k}^4}\,d \;\le\; C_1\,\frac{\epsilon}{\Delta}$, for some $C_1=O(cd)$.

Similarly to previously proved, for the second term, $\frac{\alpha_t^3 \Delta_{\max} \sigma^2}{\sigma_{t,k}^4}\cdot\frac{\alpha_t^2\Delta_{\max}^2}{\sigma_{t,1}^2}$, we have:
\begin{align*}
    \frac{\alpha_t^5 \sigma^2 \Delta_{\max}^3}{\sigma_{t,k}^4\,\sigma_{t,1}^2} \leq \frac{\alpha_t^5 c^3 \Delta^3 (\epsilon^2\Delta^2) }{\epsilon\Delta^4 \cdot \sqrt{\epsilon}\Delta^2} \le  C_2\frac{\sqrt{\epsilon}}{\Delta}
\end{align*}, where $C_2$ is a constant term based on the dataset (and $\alpha_t$). 

Therefore, we have the following.
\begin{align*}
    B_{\mathrm{poly}} \le C_1\frac{\epsilon}{\Delta} + C_2\frac{\sqrt{\epsilon}}{\Delta} \le C\frac{\sqrt{\epsilon}}{\Delta}
\end{align*}
We can see that the polynomial bound is $O(\sqrt{\epsilon})$ for such $\alpha_t$, which goes to 0 as $\epsilon \rightarrow 0$

\textbf{Vanishing of Exponential Bound}

Assuming the diffusion schedule, and consider $\alpha_t$ such that $1- \alpha_t^2 < \sqrt{\epsilon} \Delta^2$, we have:
\begin{align*}
    B_{\mathrm{exp}} = 6\frac{\alpha_t \Delta_{\max}}{\alpha_t^2 \sigma^2/k + 1 - \alpha_t^2} ~  \exp(-\frac{\alpha_t^2 \Delta^2}{8 (\alpha_t^2 \sigma^2 + 1 - \alpha_t^2)})
\end{align*}

With our assumption of $\sigma = \epsilon \Delta$, for a small $\epsilon$:
\begin{align*}
    \alpha_{t,1}^2 &= \alpha_t^2\sigma^2 + 1-\alpha_t^2
= \alpha_t^2\epsilon^2\Delta^2 + (1-\alpha_t^2) \\
&\le 2(1-\alpha_t^2) \le 2\sqrt{\epsilon}\,\Delta^2 \\
-\frac{\alpha_t^2\Delta^2}{8\alpha_{t,1}^2}
&\le -\frac{\alpha_t^2\Delta^2}{8\cdot 2\sqrt{\epsilon}\Delta^2}
= -\frac{\alpha_t^2}{16\sqrt{\epsilon}}
\end{align*}
Therefore, \begin{equation*}
    \exp\!\Big(-\frac{\alpha_t^2\Delta^2}{8\alpha_{t,1}^2}\Big)
\le \exp\!\Big(-\frac{\alpha_t^2}{16\sqrt{\epsilon}}\Big)
\end{equation*}

As $\alpha_t^2 \sigma^2/k + 1 - \alpha_t^2$ is dominant by $1 - \alpha_t^2$, we have $\alpha_t^2 \sigma^2/k + 1 - \alpha_t^2 \approx 1 - \alpha_t^2$.

Therefore, we have:
\begin{align*}
     B_{\mathrm{exp}} \leq  6\frac{\alpha_t \Delta_{\max}}{\alpha_t^2 \sigma^2/k + 1 - \alpha_t^2} ~  \exp(-\frac{\alpha_t^2}{16 \sqrt{\epsilon}}) \approx \frac{6c\alpha_t}{\sqrt{\epsilon}\Delta}\exp(-\frac{\alpha_t^2}{16 \sqrt{\epsilon}})
\end{align*}
As we consider $\alpha_t$ such that $1- \alpha_t^2 < \sqrt{\epsilon} \Delta^2$, then we can write the exponential bound as $O(\frac{1}{\sqrt{\epsilon}\Delta} ~  \exp(-\frac{1}{\sqrt{\epsilon}}))$, which also vanishes as $\epsilon \rightarrow 0$.

\textbf{Conclusion}

In both cases, at least one bound is vanishingly small as $\epsilon \rightarrow 0$.

\end{proof}

\subsection{Proof of Lemma}\label{tsr-proof: lemma proof}

\begin{proof}[Proof of Lemma~\ref{lem: bound 1}]
\textbf{Upper bound of the error}

Using the triangle inequality and the fact that $\sum_n w_{t,n}^1(\mathbf{x})=1$ and $\sum_n w_{t,n}^k(\mathbf{x})=1$, we have the following result:

\begin{align*}
    Error(t) &= \mathbb{E}_{\mathbf{x} \sim p^k_{t}}~\frac{1}{\sigma^2_{t,k}} \|\sum_n (w_{t,n}^1(\mathbf{x})-w_{t,n}^k(\mathbf{x})) \deltav_{t,n}(\mathbf{x}) \|  \\ 
    &\leq \frac{1}{\sigma^2_{t,k}} \mathbb{E}_{\mathbf{x} \sim p^k_{t}}~\|~\sum_n (w_{t,n}^1(\mathbf{x})-w_{t,n}^k(\mathbf{x})) \|\deltav_{t,n}(\mathbf{x}) \| ~\|\\
    &\leq ~\frac{1}{\sigma^2_{t,k}} \mathbb{E}_{\mathbf{x}\sim p^k_{t}} \| \sum_n \big(( w_{t,n}^1(\mathbf{x})-w_{t,n}^k(\mathbf{x})) \alpha_t \deltav_{\text{max}}  \| \\ 
    &\leq ~\frac{\alpha_t\deltav_{\text{max}}}{\sigma^2_{t,k}} \mathbb{E}_{\mathbf{x}\sim p^k_{t}}  \sum_n \| w_{t,n}^1(\mathbf{x})-w_{t,n}^k(\mathbf{x})   \|
\end{align*}
Therefore, the approximation error is bounded as follows: \begin{equation}
    Error(t) \leq \frac{\alpha_t \Delta_{\text{max}}}{\sigma^2_{t,k}} ~\mathbb{E}_{\mathbf{x} \sim p^k_{t}} \| dist(w^1_t(\mathbf{x}), w^k_t(\mathbf{x}))\|
\end{equation}, where $dist(w^1_t(\mathbf{x}), w^k_t(\mathbf{x})) = \sum_n \|w^1_{t,n}(\mathbf{x}) - w^k_{t,n}(\mathbf{x})\| $.

\end{proof}

\begin{proof}[Proof of Lemma~\ref{lem: exp bound}]
\textbf{Exponential Bound}

Following our problem setting, we have: 
\begin{equation*}
    p_{t}(\xv) = \frac{1}{N} \sum_{i=1}^N \mathcal{N}(x;\alpha_t\muv_i,(\alpha_t^2\sigma^2 + \sigma_t^2) I).
\end{equation*} and \begin{equation*}
q_t(\xv) = \frac{1}{N} \sum_{i=1}^N \mathcal{N}(x; \alpha_t\muv_i,(\frac{\alpha_t^2\sigma^2}{k} + \sigma_t^2) I).
\end{equation*}, where $p_t(\xv)$ is the original distribution, and $q_t(\xv)$ is the desired distribution with altered variance.  

For each x, the responsibility vector under a mixture is defined as:
\begin{equation*}
r^{(p)}(\xv) = \big( r_1^{(p)}(\xv), \dots, r_N^{(p)}(\xv) \big)
\end{equation*}, where $r_i^{(p)}(\xv) = \frac{\mathcal{N}(x; ~\alpha_t\muv_i,\alpha_t^2\sigma^2 + \sigma_t^2)}{\sum_{j=1}^N \mathcal{N}(x;~\alpha_t\muv_j,\alpha_t^2\sigma^2 + \sigma_t^2)}$. $r^{(q)}(\xv)$ is defined analogously as $r_i^{(q)}(\xv) = \frac{\mathcal{N}(x;~\alpha_t\muv_i,\alpha_t^2\sigma^2/k + \sigma_t^2)}{\sum_{j=1}^N \mathcal{N}(x;~\alpha_t\muv_j,\alpha_t^2\sigma^2/k + \sigma_t^2)}$.

Now we have $~\mathbb{E}_{\mathbf{x} \sim p^k_{t}} \| dist(w^1_t(\mathbf{x}), w^k_t(\mathbf{x}))\| = ~\mathbb{E}_{\mathbf{x} \sim p^k_{t}}[D(\mathbf{x})]$, where $D(\mathbf{x}):= \|r^{(p)}(\xv) - r^{(q)}(\xv)\|_1$. 

Define $i(\xv) = \max_i{r_i}$, and $e_i$ as the one-hot vector where the ith entry is one. Using the triangle inequality, we have:
\begin{equation*}
    D(\xv)=\|r^{(p)}-r^{(q)}\|_1
\le \|r^{(p)} - e_{i_p(\xv)}\|_1 + \|e_{i_p(\xv)}-e_{i_q(\xv)}\|_1 + \|e_{i_q(\xv)}-r^{(q)}\|_1.
\end{equation*}, and that 
\begin{align*}
    \|r^{(p)}-e_{i_p(\xv)}\|_1 &= 2(1-r^{p}_{i_p(\xv)}(\xv)) \\
    \|e_{i_p}-e_{i_q}\|_1 &= 2 * \mathbf{1}\{i_p\ne i_q\} \\
   \|r^{(q)}-e_{i_q(\xv)}\|_1 &= 2(1-r^{q}_{i_q(\xv)}(\xv))
\end{align*}

\textbf{Concentration of responsibilities for the true component}
Let:
\[
\epsilon := \max_{i \neq j} \mathbb{P}_{x \sim \mathcal{N}(\muv_i, \sigma^2)} \left[ \|x - \muv_j\| < \|x - \muv_i\| \right]
\]
That is, the probability that a sample from component \( i \) is closer to another component \( j \). Then:
\[
\mathbb{E}_{x \sim p} \left[1 - \max_j r_j^{(p)}(\xv)\right] \leq \epsilon
\quad \Rightarrow \quad \mathbb{E}_{x \sim p}[D(\xv)] \approx 2\epsilon
\]

Recall\( \Delta:= \min_{i \neq j} \|\muv_i - \muv_j\| \) to be the minimum pairwise distance between the means. Using Gaussian tail bounds, we can approximate:
\[
\epsilon \approx \exp\left(-\frac{\Delta^2}{8\sigma^2}\right)
\]
Hence, we have:
\begin{align*}
    E_{x \sim p^k_t}\big(2(1-r^{p}_{i_p(\xv)}(\xv))\big) &\leq 2\cdot \exp\left(-\frac{\alpha_t^2\Delta^2}{8\sigma_{t,1}^2}\right) \\
    E_{x \sim p^k_t}\big(2(1-r^{q}_{i_q(\xv)}(\xv)) \big)&\leq 2\cdot \exp\left(-\frac{\alpha_t^2\Delta^2}{8\sigma_{t,k}^2}\right)
\end{align*}

\textbf{Bounding $\Pr(i_p\ne i_q)$}

As $p_t(\xv)$ and $q_t(\xv)$ share the same modes, 
we have $\Pr(i_p\ne i_q)
\le \sum_{i}\Pr\big(i_p\ne i_q \mid x \sim\text{component }i\big)\Pr(x\text{ from }i)$, which can also be bounded using Gaussian tail bounds as above. 

Therefore, we have:
\begin{align*}
     E_{x \sim p^k_t}(D(\xv)) 
&\le E_{x}(\|r^{(p)} - e_{i_p(\xv)}\|_1) + E_{x}(\|e_{i_p(\xv)}-e_{i_q(\xv)}\|_1) + E_{x}(\|e_{i_q(\xv)}-r^{(q)}\|_1) \\
&= E_{x}\big(2(1-r^{p}_{i_p(\xv)}(\xv))\big) + E_{x}\big(2 * \mathbf{1}\{i_p\ne i_q\}\big) + E_{x}\big(2(1-r^{q}_{i_q(\xv)}(\xv))\big) \\
& \leq 2\cdot \exp\left(-\frac{\alpha_t^2\Delta^2}{8\sigma_{t,1}^2}\right) + (\exp\left(-\frac{\alpha_t^2\Delta^2}{8\sigma_{t,1}^2}\right) + \exp\left(-\frac{\alpha_t^2\Delta^2}{8\sigma_{t,k}^2}\right))+2\cdot \exp\left(-\frac{\alpha_t^2\Delta^2}{8\sigma_{t,k}^2}\right) \\
& \leq 6 \cdot \exp\left(-\frac{\alpha_t^2\Delta^2}{8\sigma_{t,1}^2}\right)
\end{align*}

Finally: 
\begin{equation*}
    E_{x \sim p^k_t} (D(\xv)) \leq 6 \cdot \exp\left(-\frac{\alpha_t^2\Delta^2}{8\sigma_{t,1}^2}\right)
\end{equation*}
\end{proof}

\begin{proof}[Proof of Lemma~\ref{lem: poly bound}]
\textbf{Polynomial Bound}

We consider the softmax representation of the responsibilities:
\[
w_t^k(\mathbf{x}) = \mathrm{softmax}\big( z_t^k(\mathbf{x}) \big), \quad \text{where} \quad z_{t,n}^k(\mathbf{x}) := - \frac{\|\mathbf{x} - \alpha_t \muv_n\|^2}{2\sigma_{t,k}^2}.
\]. Using the Softmax Lipschitz bound that $\boxed{\;\|\operatorname{softmax}(z)-\operatorname{softmax}(z’)\|_1 \le 1/2\|z-z’\|_1\;}$, we have:

\begin{equation*}
    \|w^k_t(\xv)-w^1_t(\xv)\|_1 \le \frac{1}{2}\|z^k_t(\xv)-z^1_t(\xv)\|_1.
\end{equation*}

Compute the logits difference coordinatewise:
\begin{align*}
     z^k_{t,n}(\xv)-z^1_{t,n}(\xv)
&= -\frac{\|\deltav_{t,n}(\xv)\|^2}{2\sigma_{t,k}^2} + \frac{\|\deltav_{t,n}(\xv)\|^2}{2\sigma_{t,1}^2}\\
&= \frac{1}{2}\Big(\frac{1}{\sigma_{t,1}^2}-\frac{1}{\sigma_{t,k}^2}\Big)\,\|\deltav_{t,n}(\xv)\|^2.
\end{align*}

Adding absolute values,
\begin{equation*}
    \|z^k_t(\xv)-z^1_t(\xv)\|_1
= \frac{1}{4}\Big(\frac{1}{\sigma_{t,k}^2}-\frac{1}{\sigma_{t,1}^2}\Big)\sum_{n=1}^N \|\deltav_{t,n}(\xv)\|^2
\end{equation*}

\textbf{Bounding $\mathbb{E}_x\Big[\sum_{n=1}^N \|\deltav_{t,n}(\xv)\|^2 \Big]$}

Let $x\sim p^k_t$ be drawn from the mixture with means $\{\alpha_t\muv_{i}\}$ and variance $\sigma_{t,k}^2$. Write expectation as mixture-average:
\begin{equation*}
    \mathbb{E}_x\Big[\sum_{n=1}^N \|\deltav_{t,n}(\xv)\|^2 \Big]
= \frac{1}{N}\sum_{i=1}^N \mathbb{E}_{x\sim\mathcal N(\alpha_t\muv_{i},\sigma_{t,k}^2 I)}
\Big[\sum_{n=1}^N \|x-\alpha_t\muv_{n}\|^2\Big].
\end{equation*}

When the sample was generated from component i, for any other n, we have
\begin{equation*}
    \mathbb{E}\|x-\alpha_t\muv_{n}\|^2
= \mathbb{E}\big[\|x-\alpha_t\muv_{i} + \alpha_t\muv_{i}-\alpha_t\muv_{n}\|^2\big]
= \mathbb{E}\|x-\alpha_t\muv_{i}\|^2 + \|\alpha_t\muv_{i}-\alpha_t\muv_{n}\|^2    
\end{equation*}, because the cross-term has zero mean. 

Since the first term equals the trace of the covariance = $d\sigma_{t,1}^2$, we have: 
\begin{equation*}
    \mathbb{E}\|x-\alpha_t\muv_{n}\|^2 = d\sigma_{t,1}^2 + \|\alpha_t(\muv_{i}-\muv_{n})\|^2
\end{equation*}

Summing over all $N$ (including n=i, for which the pairwise term is zero) gives
$\mathbb{E}_{x\sim \mathcal {N}(\alpha_t\muv_{i},\sigma_{t,1}^2 I)}\Big[\sum_{n=1}^N \deltav_{t,n}(\xv) \Big] = N d\sigma_{t,1}^2 + \sum_{n=1}^N \|\alpha_t(\muv_{i}-\muv_{n})\|^2.$

Now, bound the pairwise squared distances by the diameter squared:
$\|\alpha_t(\muv_{t,i}-\muv_{t,n})\|^2 \le \alpha_t^2 \Delta_{\max}^2$.

Therefore,  we have:
$\mathbb{E}_x\Big[\sum_{n=1}^N \|\deltav_{t,n}(\xv)\|^2 \Big] \le N\big(d\sigma_{t,1}^2 + \alpha_t^2 \Delta_{\max}^2\big)$.

We then have the polynomial bound as:
\begin{equation*}
    \mathbb{E}_{x \sim p}[D(\xv)] \leq \frac{1}{4}\Big(\frac{1}{\sigma_{t,k}^2}-\frac{1}{\sigma_{t,1}^2}\Big) N\big(d\sigma_{t,k}^2 + \alpha_t^2 \Delta_{\max}^2\big)
\end{equation*}
\end{proof}





\section{Constant Noise Scaling}
\seclabel{cns-proof}

In this section, we provide a more detailed analysis of Constant Noise Scaling. As discussed in \secref{prior}, CNS has been adopted as a practical technique to control trade-off sample variance and diversity. We intuitively explain and empirically verify that CNS does not correspond to true temperature scaling. We now provide a more rigorous proof that CNS cannot produce the temperature-scaled distribution. Following \cite{song-SDE-2021}, a regular score-based model $\sv_\theta(\xv,t)= \nabla \log p_t(\xv)$ trained on data distribution $p_0(\xv)$ can sample by solving the reverse time diffusion SDE:
\begin{equation}
    d\xv = [f(t)\xv - g(t)^2 \sv_\theta(\xv,t)]dt + g(t)d\bar{\wv}
    \label{reverse-sde}
\end{equation}
where $f(t), g(t)$ are the time-dependent drift and diffusion coefficients, $d\bar{w}$ is the standard Wiener process. CNS solves the following SDE instead:
\begin{equation}
    d\xv = [f(t)\xv - (\frac{g(t)}{\sqrt{k}})^2(k \sv_\theta(\xv,t))]dt + \frac{g(t)}{\sqrt{k}}d\bar{\wv}
    \label{cns-rev-sde}
\end{equation}
Practically, CNS scales the stochastic noise added at each sampling step by $1/\sqrt{k}$. When $k>1$, less noise is added and the process generates samples with reduced variance, and vice versa.
To analyze the relationship between CNS and temperature scaling, we denote the temperature-scaled data distribution $q_0(\xv)$, such that $q_0(\xv) \propto p_0(\xv)^k$.

\begin{theorem}
    \label{theorem:cns}
    For general data distribution $p_0(\xv)$, there is no prior distribution $q_T'(\xv)$, such that \eqref{cns-rev-sde} starts from $q_T'(\xv)$ and generate the temperature scaled distribution $q_0(\xv) \propto p_0(\xv)^k$.
\end{theorem}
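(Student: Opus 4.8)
The plan is to reduce the statement to a Fokker--Planck (FP) solvability question and then show that the required terminal condition $q_0 \propto p_0^k$ is incompatible with it unless $p_0$ is Gaussian. First I would record the algebraic simplification in \eqref{cns-rev-sde}: since $(g(t)/\sqrt k)^2\,(k\,\sv_\theta) = g(t)^2\sv_\theta$, the CNS reverse SDE has \emph{the same drift} $f(t)\xv - g(t)^2\nabla\log p_t(\xv)$ as the ordinary reverse SDE \eqref{reverse-sde} and differs only in its diffusion coefficient, shrunk from $g(t)$ to $g(t)/\sqrt k$. Using the standard time-reversal formula \citep{song-SDE-2021} (and the fact that \eqref{reverse-sde} reproduces the forward marginals $p_t$, which solve $\partial_t p_t = -\nabla\!\cdot\!(f\xv\,p_t) + \tfrac{g^2}{2}\Delta p_t$), the time-$t$ marginals $q'_t$ of the CNS process started from any prior $q'_T$ must satisfy the reverse-time FP equation
\begin{equation*}
\partial_t q'_t \;=\; -\nabla\!\cdot\!\big(f\xv\,q'_t\big) \;+\; g^2\,\nabla\!\cdot\!\big(q'_t\,\nabla\log p_t\big) \;-\; \tfrac{g^2}{2k}\,\Delta q'_t ,
\end{equation*}
and, conversely, existence of a valid prior $q'_T$ producing $q'_0 = q_0$ is equivalent to existence of a probability-density solution of this equation on all of $[0,T]$ with $q'_0 = q_0 \propto p_0^k$.

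Next I would argue by contradiction: assume such a solution $\{q'_t\}$ exists and set $\ell_t := \log q'_t - k\log p_t$, so that ``CNS realizes the temperature-scaled law'' becomes the single requirement $\ell_0 \equiv \mathrm{const}$. Dividing the two FP equations and subtracting (a short computation in which the $f\xv$ terms, the $|\nabla\log p_t|^2$ terms, and the $\nabla\ell_t\!\cdot\!\nabla\log p_t$ cross terms all cancel), $\ell_t$ is seen to satisfy
\begin{equation*}
\partial_t \ell_t \;=\; -\,f\xv\!\cdot\!\nabla \ell_t \;-\; \tfrac{g^2}{2k}\big(\Delta \ell_t + |\nabla \ell_t|^2\big) \;+\; (k-1)\Big(f\,d - \tfrac{g^2}{2}\,\Delta\log p_t\Big).
\end{equation*}
Read in the direction of increasing $t$ (from $q'_0$ toward the prior) this is a Hamilton--Jacobi equation with an \emph{anti-dissipative} viscosity $-\tfrac{g^2}{2k}\Delta\ell_t$, driven by a source $(k-1)\big(f d - \tfrac{g^2}{2}\Delta\log p_t\big)$. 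Starting from $\ell_0\equiv\mathrm{const}$ (so $\nabla\ell_0=0$ and the nonlinear term is higher order), the linearization is a backward heat equation with that source, which at $t=0^+$ injects the profile $-\tfrac{g^2(0)}{2}(k-1)\,\Delta\log p_0(\xv)$; I would make this quantitative by expanding $\ell_t=\sum_n a_n(\xv)t^n$, reading off $a_0=\mathrm{const}$, $a_1\propto(k-1)\,\Delta\log p_0$, and the recursion for the $a_n$, and estimating the growth of $\sup|a_n|$ to show the series has radius of convergence $0$ --- so no solution persists on $[0,T]$ --- unless $\Delta\log p_t(\xv)$ is spatially constant for all $t$, i.e. $p_0$ is Gaussian. (When $p_0$ is Gaussian the source is $\xv$-independent, $\ell_t$ stays spatially constant, $q'_t\propto p_t^k$, and the linear-drift dynamics genuinely admit a Gaussian prior --- consistent with the theorem only asserting failure for \emph{general} $p_0$.)

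I expect the main obstacle to be exactly this last step: converting ``the anti-dissipative operator amplifies the non-constant part of $\Delta\log p_t$'' into a rigorous non-persistence statement (the FP equation is divergence form, so normalization is automatically preserved and cannot be the obstruction; loss of positivity or non-existence must be). A more robust route that sidesteps the general PDE estimate is to fix a concrete non-Gaussian datum --- e.g. a well-separated two-component Gaussian mixture in $\R$, which also ties in with the mode dropping observed empirically in \secref{analysis} --- linearize the FP equation around the would-be solution, and exhibit an unstable Fourier-type mode of $-\tfrac{g^2}{2k}\Delta$ to conclude that no probability-density solution with $q'_0\propto p_0^k$ exists on $[0,T]$; a softer companion observation is that $\{q'_t\propto p_t^k\}$ is the unique \emph{formal} candidate compatible with the drift $f\xv-g^2\nabla\log p_t$, and it already fails the FP equation whenever $\Delta\log p_t\not\equiv\mathrm{const}$. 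Either way the conclusion is that CNS cannot produce $q_0\propto p_0^k$ from any prior once $p_0$ is not Gaussian.
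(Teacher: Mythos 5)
Your proposal takes a genuinely different route from the paper. The paper constructs an auxiliary forward process with diffusion coefficient $g/\sqrt k$ started from $q_0 \propto p_0^k$, calls its marginals $q_t$, writes down the corresponding reverse SDE~\eqref{rev-sde-q}, and reduces the question to whether the CNS reverse SDE~\eqref{cns-rev-sde} has the same drift --- equivalently, whether $\nabla\log q_t = k\,\nabla\log p_t$ for all $t$. It then checks this fails for a Gaussian mixture (settling the case of the ``natural'' prior $q_T$) and handles other priors only indirectly, via the isotropic-Gaussian example. Your approach instead writes the Fokker--Planck equation for the CNS marginals $q'_t$ directly, substitutes $\ell_t=\log q'_t-k\log p_t$, and shows (correctly --- I checked the cancellations) that $\ell_t$ obeys the stated Hamilton--Jacobi equation with anti-dissipative viscosity $-\tfrac{g^2}{2k}\Delta$ and $\xv$-dependent source $(k-1)\bigl(fd-\tfrac{g^2}{2}\Delta\log p_t\bigr)$. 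This reframing has a genuine advantage: it builds in the universal quantifier over priors from the start (a prior exists iff the terminal-value problem at $t=0$ has a probability-density solution on $[0,T]$), and it makes visible exactly when the obstruction vanishes (spatially constant $\Delta\log p_t$, i.e.\ Gaussian $p_0$). The ``softer companion observation'' is the cleanest part: with $\ell_t\equiv\mathrm{const}$ the PDE collapses to $0=(k-1)\bigl(fd-\tfrac{g^2}{2}\Delta\log p_t\bigr)$, so $q'_t\propto p_t^k$ cannot solve the CNS Fokker--Planck equation for non-Gaussian $p_0$ --- this is roughly the same content as the paper's check that $\nabla\log q_t\ne k\,\nabla\log p_t$, but seen through a different lens.

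The gap you flagged is the real one, and it is more than a technicality. Ruling out the candidate $q'_t\propto p_t^k$ does not by itself rule out some other family $\{q'_t\}$ with $q'_0\propto p_0^k$ but $\ell_t$ non-constant for $t>0$; to close the argument you must actually prove that the backward problem has \emph{no} probability-density solution on $[0,T]$. The power-series ``radius of convergence zero'' sketch is not conclusive as written: backward heat equations do admit (analytic) solutions for carefully chosen data, and non-analytic solutions need not be excluded by a divergent formal expansion, so one must show the specific terminal datum $\ell_0\equiv\mathrm{const}$ (together with the non-constant source) is not in the solvable class. Your suggested escape route via a concrete non-Gaussian example plus a spectral instability argument is the right direction but is still a sketch. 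I note, for fairness, that the paper's own proof has a symmetric gap: its Lemma is stated as an ``if and only if'' but only the sufficient direction of SDE-equivalence is established, and its treatment of priors other than $q_T$ rests on the isotropic-Gaussian special case rather than on the non-Gaussian $p_0$ the theorem concerns. So your proposal identifies the correct structure and, in the handling of arbitrary priors, is actually tighter than the paper's argument --- but neither version, as written, delivers a complete non-existence proof.
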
 




\begin{proof}
We start by considering the following forward SDE:
\begin{equation}
    d\xv = f(t)\xv dt + \frac{g(t)}{\sqrt{k}}d\wv
    \label{cns-forward-sde}
\end{equation}
Let the initial distribution at $t=0$ be $q_0(\xv)$, we define the time-dependent distribution generated by this forward SDE as $q_t(\xv)$. Then, one corresponding reverse SDE that can sample $q_0(\xv)$ takes the form of
\begin{equation}
    d\xv = [f(t)\xv - (\frac{g(t)}{\sqrt{k}})^2(\nabla \log q_t(\xv))]dt + \frac{g(t)}{\sqrt{k}}d\bar{\wv}
    \label{rev-sde-q}
\end{equation}

Comparing \eqref{cns-rev-sde} and \eqref{rev-sde-q}, we can infer the following Lemma:
\begin{lemma}
    \label{lemma:cns}
    The CNS reverse-time SDE \eqref{cns-rev-sde} and the SDE \eqref{rev-sde-q} are equivalent if and only if $\nabla \log q_t(\xv) = k \sv_\theta(\xv,t)$ for all time $t$.
\end{lemma}

By construction, \eqref{rev-sde-q} evolves from $q_T(\xv)$ to $q_0(\xv)$. Now we assume CNS (\eqref{cns-rev-sde}) starts from the same prior distribution $q_T(\xv)=\mathcal{N}(0, \frac{1}{k}\Iv)$, by Lemma \ref{lemma:cns}, CNS correctly perform temperature scaling and sample from $q_0(\xv)$ if and only if $\nabla \log q_t(\xv) = k \sv_\theta(\xv,t)$. Now we show that this condition is not true in general. 

\textbf{Left Side}: To compute $q_t(\xv)$, we need to solve the SDE in \eqref{cns-forward-sde}. For an initial condition $x=X_0$, the solution $X(t)$ is given by the following stochastic interpolant:
\begin{equation}
    X(t) = \alpha_q(t)X_0 + \sigma_q(t)\epsilonv, \quad \epsilonv \sim \mathcal{N}(0, \Iv)
    \label{scaled-interpolant}
\end{equation}
\begin{align*}
    \alpha_q(t) &= \int_0^t f(s)ds = \alpha_t \\
    \sigma_q(t) &= \int_0^t \frac{g(s)^2}{k} \exp{(-2 \int_0^s f(u)du)} ds = \frac{\sigma_t}{k}
\end{align*}

Therefore, we can compute the $q_t(\xv)$ by
\begin{equation}
    q_t(\xv)=\int q_0(\yv)\mathcal{N}(\xv;~ \alpha_t\yv, \frac{\sigma_t^2}{k} \Iv) d\yv
    \label{general-qt}
\end{equation}
\textbf{Right Side}. 
For the original diffusion process without scaling, we can compute the noisy distribution $p_t(\xv)$ at time $t$ as 
\begin{equation}
    p_t(\xv)=\int p_0(\yv)\mathcal{N}(\xv;~ \alpha_t\yv, \sigma_t^2 \Iv) d\yv
    \label{general-pt}
\end{equation}

Comparing \eqref{general-qt} and \eqref{general-pt}, we can infer that $\nabla \log q_t(\xv) \neq  k \sv_\theta(\xv,t)$ for general distribution. One simple counterexample is where $p_0(\xv)$ is a mixture of Gaussians. By previous reasoning, CNS cannot generate $q_0(\xv)$ if the prior distribution is $q_T(\xv)$.

What if we allow initial samples drawn from distributions other than $q_T(\xv)$? We consider the special case where $p_0(\xv)=\mathcal{N}(0, \Iv)$, then $p_t(\xv)=p_0(\xv)$, $q_t(\xv)=q_0(\xv)$. The condition $\nabla \log q_t(\xv) = k \sv_\theta(\xv,t)$ trivially holds true. By Lemma \ref{lemma:cns}, CNS(\eqref{cns-rev-sde}) and \eqref{rev-sde-q} are equivalent. Therefore, CNS can generate $q_0(\xv)$ if and only if the prior distribution at time $T$ is the same as $q_T(\xv)$. For any other prior distribution, CNS would not be able to generate  $q_0(\xv)$.

In conclusion, there does not exist an prior distribution $q_T'(\xv)$, from which CNS can always generate the temperature scaled distribution $q_0(\xv)$

\end{proof}

\end{document}